\newcommand{\citet}[1]
{\citeauthor{#1} \shortcite{#1}}
\newcommand{\citep}{\cite}
\newcommand{\todo}[1]{}
\renewcommand{\todo}[1]{{\color{red} TODO: {#1}}}
\newtheorem{theorem}{Theorem}
\newtheorem{lemma}{Lemma}
\newcommand{\argmax}{\operatornamewithlimits{\mathrm{arg\,max}}}
\newcommand{\algo}{{\sc\textsf{ST-SafeMDP}}}
\begin{document}
\title{Safe Exploration in Markov Decision Processes with Time-Variant Safety \\using Spatio-Temporal Gaussian Process}

\author{Akifumi Wachi \\ IBM Research \\ akifumi.wachi@ibm.com
\And Hiroshi Kajino \\ IBM Research \\ kajino@jp.ibm.com
\And Asum Munawar \\ IBM Research \\ asim@jp.ibm.com}
\maketitle

\begin{abstract}
In many real-world applications (e.g., planetary exploration, robot navigation), an autonomous agent must be able to explore a space with guaranteed safety.
Most safe exploration algorithms in the field of reinforcement learning and robotics have been based on the assumption that the safety features are a priori known and time-invariant. This paper presents a learning algorithm called \algo \ for exploring Markov decision processes (MDPs) that is based on the assumption that the safety features are \textit{a priori unknown} and \textit{time-variant}. In this setting, the agent explores MDPs while constraining the probability of entering unsafe states defined by a safety function being below a threshold. The unknown and time-variant safety values are modeled using a spatio-temporal Gaussian process. However, there remains an issue that an agent may have no viable action in a shrinking true safe space. To address this issue, we formulate a problem maximizing the cumulative number of safe states in the worst case scenario with respect to future observations. The effectiveness of this approach was demonstrated in two simulation settings, including one using real lunar terrain data.
\end{abstract}

% Body
% Introduction
\section{Introduction}
The fundamental challenge of decision making in many real-world applications is that safety is often \textit{a priori unknown} and \textit{time-variant}.
For example, in the case of Mars rovers, the safe space is a priori unknown and is discovered by the agent or manually defined by human operators; that is, potential hazards must be discovered on-the-fly. 
%Practically, the rover takes images, and operators back on Earth identify hazards by examining them. 
Furthermore, especially in the lunar polar regions, factors related to safety such as illumination (associated with power generation and visibility) change drastically over time. For the on-board risk-sensitive and reactive decision making, the agent should not depend on the human operators mainly because of the communication delay. 
Therefore, the agent, such as a lunar rover, that autonomously explores an uncertain environment must \textit{learn} where it is safe and when a state is safe.

In previous work on safe reinforcement learning \cite{hans2008safe,garcia2015comprehensive} and safety-constrained decision-making \citep{fleming1995risk,schwarm1999chance,blackmore2010probabilistic} safety was assumed to be guaranteed by assigning unsafe states a high cost. In other words, safety was assumed to be a priori known and time-invariant.
A useful approach to dealing with a priori unknown functions is to use a Gaussian process (GP, see \citet{rasmussen2006gaussian}). A GP is well suited for safety-critical problem settings in which an action known to be safe is taken during exploration. This is because the confidence interval (i.e., variance) as well as the mean value of an a priori unknown safety function can be leveraged. Due to being based on the assumption that the function (i.e., smoothness or continuity) is regular, GP-based safe exploration algorithms have generally provided efficient solutions as well as a rigorous theoretical guarantee of safety. Previous work on safe exploration using GPs has addressed both stateless settings \citep{sui2015safe,sui2017correlational} and stateful settings \citep{turchetta2016safe,berkenkamp2017safe,wachi2018safe}. However, in both settings, safety functions were assumed to be time-invariant. In real applications, an actual safety function often changes over time. To the best of our knowledge, current applications of GP-based safe exploration algorithms are in medical care (e.g., clinical treatment) and robotics (e.g., planetary exploration). For example, in the planetary exploration, since the safety features are associated with the planetary environment, the safety function must be treated as time-variant.

When safety is considered to be time-variant, a simple approach is to use spatio-temporal GPs \cite{hartikainen2011sparse,sarkka2012infinite}. By incorporating the spatio-temporal information into the kernel function, the safety function values are predicted in both spatial and temporal directions. However, we cannot solve a critical issue if we just use spatio-temporal GPs; that is, in the time-variant environment, the true safe space may shrink, and the agent may not be able to take any viable actions. In the time-invariant case, the true safe space will not change, and the predicted safe space will expand or at least remain unchanged. In our problem setting, states that were previously identified as safe may no longer be safe. A conceptual image is shown in Figure~\ref{fig:concept}. In the rest of the paper, \textit{safe space} indicates \textit{predicted safe space}.

\vspace{-5mm}
\paragraph{Contribution} To address the above issue, we define a safe space for a time-variant safe scenario and then formulate a min-max problem in which the cumulative number of safe states in the worst case scenario with respect to future observations is maximized. Under the worst case assumption that the safety function is Lipschitz continuous, safety is guaranteed in subsequent steps while reducing the possibility that an agent gets stuck due to having no viable action it can take.

In this paper, we present a novel safety-constrained exploration algorithm called Spatio-Temporal Safe Markov decision process (\algo). At a high level, our approach models the safety function using a spatio-temporal GP model. It first predicts future safety function values and then uses the GP model to compute the safe space defined for the time-variant safety scenario. Finally, it chooses the next target sample such that the agent is motivated to explore an uncertain state that is likely to expand the safe space while preventing the current safe space from shrinking.

We theoretically analyze and show that \algo\ guarantees safety with high probability. We also demonstrate empirically that our approach safely and efficiently explores MDPs by using a time-variant safety function. Finally, we describe and present the results of two simulations, one performed using a synthetic environment and one performed using a real environment.

% Problem Statement
\section{Problem Statement}

We consider a MDP with time-variant safety characterized by a tuple $\mathcal{M} = \langle\mathcal{S}$, $\mathcal{T}$, $\mathcal{A}$, $f(\bm{s}, a)$, $g(t, \bm{s}) \rangle$, with $\mathcal{S}$ the set of states $\{\bm{s}\}$, $\mathcal{T}$ the set of times $\{\bm{t}\}$, $\mathcal{A}$ the set of actions $\{a\}$, $f: \mathcal{S} \times \mathcal{A} \rightarrow \mathcal{S}$ the (time-invariant) deterministic transition model, and $g: \mathcal{T} \times \mathcal{S} \rightarrow \mathbb{R}$ the safety function. We assume that the safety function is {\it time-variant} and {\it not known a priori}. At each time step $t$, the agent must be in a safe state and observes noise-perturbed value of $g$; hence, it observes $y_t = g(t, \bm{s}_t) + n_t$, where $n_t$ is the noise. The safety function value $g(t, \bm{s}_t)$ of state $\bm{s}_t$ must be above some threshold, $h \in \mathbb{R}$; that is,
$
g(t, \bm{s}_t) \ge h.
$
Note that we consider the safety function to depend on only the time and state. 

To make this problem tractable, we make an assumption on regularity in the form of a GP to represent the safety function; that is, we assume that similar states possess similar levels of safety at similar times.\footnote{We do not focus on environmental hazards that 1) can be expressed as a binary function (e.g., cliffs or rocks) or 2) suddenly emerge between one step and the next.} The values of $g$ for unvisited states at a future time are predicted using the GP on the basis of previous observations for visited states. 

To learn the a priori unknown safety function, the agent should \textit{explore} the state space. 
In this paper, we focus on finding a good policy for efficiently expanding the safe space in the environment with time-variant and a priori unknown safety.\footnote{Our objective differs from that of typical reinforcement learning frameworks in which the objective is to find the policy that maximizes the cumulative reward.} Hence, we formulate the problem of finding the optimal next target state as
\begin{alignat}{2}
\label{eq:objective_func}
& \quad \max_{\bm{s}_t} \left( \min_{y_ {t+1}, \cdots, y_N} \sum_{i=t}^{N}  q(i, \bm{s}_i; \bm{y}_{i-1}) \right)\\
\label{eq:constraint}
&\text{subject to} \ \  g(i, \bm{s}_i) \ge h, \quad \forall i = [t, N],
\end{alignat}
where $\bm{y}_{i-1} = [y_1, y_2, \cdots, y_{i-1}]$, $N$ is the terminal time step, and $q(i, \bm{s}_i; \bm{y}_{i-1})$ represents the number of safe states that are identified on the basis of the set of observations $\bm{y}_{i-1}$. In the rest of this paper, we denote the objective function as $J$ such that $J(t, \bm{s}_t; \bm{y}_{i-1}) = \sum_{i=t}^{N}  q(i, \bm{s}_i; \bm{y}_{i-1})$.
Intuitively, in this problem formulation, we seek to find the optimal state to visit such that the number of safe states is maximized under the assumption that the set of observations $[y_{t+1}, y_{t+2}, \cdots, y_N]$ is most unfavorable.

\begin{figure}[t]
\centering
	\includegraphics[width=60mm]{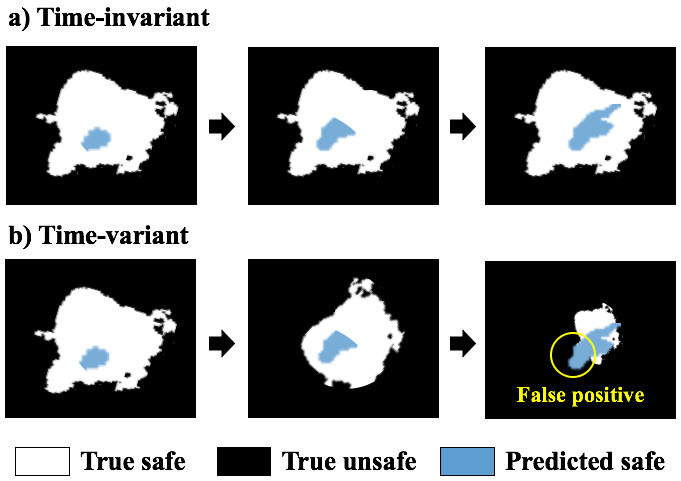}
	\caption{Conceptual image of the problem. In the time-invariant case, the true safe space (white region) does not change. However, in the time-variant case, the true safe space may shrink. Hence, without considering it, the predicted safe space (blue region) may partially overlap the true unsafe region (black region).}
	\label{fig:concept}
\end{figure}

In our algorithm, we suppose that the agent starts in a safe position. We denote by $S_0$ the set of initial safe states, which are assumed to be a priori known. This assumption is consistent with relevant previous work (e.g., \citet{sui2015safe}).

\section{Previous Work}

This work is based on several previous studies on safe exploration algorithms using GPs \cite{sui2015safe,sui2017correlational,turchetta2016safe,berkenkamp2017safe,wachi2018safe}.

Here we focus on the work by \citet{turchetta2016safe} as it is the primary basis for this paper. For better understandability, we append superscript ``$\alpha$'' to the  variables used for the explanation. \citet{turchetta2016safe} addressed the problem of safely exploring finite MDPs. They aimed to explore MDPs under the constraint of safety, assuming that the a priori unknown function was time-invariant and satisfied regularity conditions expressed using a GP prior; that is, the safety function was modeled as a GP, and specified using the mean and covariance:
$g^\alpha(\bm{s}) = \mathcal{GP}(\mu^\alpha(\bm{s}), k^\alpha(\bm{s}, \bm{s}'))$,
which is parameterized using a kernel function $k^\alpha(\bm{s}, \bm{s}')$. Observation noise was modeled as $\bm{y}^\alpha_t = g^\alpha(\bm{s}_t) +n_t$, $n_t \sim \mathcal{N}(0, \omega_t^2)$. The posterior over $g^\alpha$ was analytically calculated on the basis of $T$ measurements for states, $\{\bm{s}_1, \cdots, \bm{s}_T\}$, which is also a Gaussian distribution with mean $\mu^\alpha_T(\bm{s})$, variance $(\sigma^\alpha_T(\bm{s}))^2$, and covariance $k^\alpha_T(\bm{s}, \bm{s}')$.
They assumed that the safety function was Lipschitz continuous with Lipschitz constant $L_s$ with respect to some metric, $d_s(\bm{s}, \bm{s}')$.

\vspace{2mm}
\noindent
\textbf{Safe space in time-invariant scenario} \space\space To guarantee safety, \citet{turchetta2016safe} maintained two sets. The first set, $S^\alpha_t$, contained states
that were highly likely to satisfy the safety constraint using the GP posterior. The second
one, $\hat{S}^\alpha_t$, additionally considered the ability to reach points in $S^\alpha_t$ (i.e., reachability constraint) and the ability to safely return to the previous identified safe set, $\hat{S}^\alpha_{t-1}$ (i.e., returnability constraint). The algorithm guarantees safety by  allowing visits only to states in $\hat{S}^\alpha_t$.

Safety is evaluated on the basis of the confidence interval represented as, $Q^\alpha_t(\bm{s}) = [\mu^\alpha_{t-1}(\bm{s}) \pm \beta_t^{1/2} \sigma^\alpha_{t-1}(\bm{s})]$,
where $\beta_t$ is a positive scalar representing the required level of safety. 
Consider the intersection of $Q_t$ up to iteration $t$. It is defined as $C^\alpha_t(\bm{s}) = Q^\alpha_t(\bm{s}) \cap C^\alpha_{t-1}(\bm{s})$, $C^\alpha_0(\bm{s}) = [h, \infty],\ \forall \bm{s} \in S_0$. 
The lower and upper bounds on $C^\alpha_t(\bm{s})$ are denoted by $l^\alpha_t(\bm{s}):=\min C^\alpha_t(\bm{s})$ and $u^\alpha_t(\bm{s}):=\max C^\alpha_t(\bm{s})$, respectively. The first set, $S^\alpha_t$ is defined using $L_s$.
\begin{equation*}
S^\alpha_t = \{ \bm{s} \in \mathcal{S} \mid \exists \bm{s}' \in \hat{S}^\alpha_{t-1}:
l^\alpha_t(\bm{s}') - L_s d_s(\bm{s}, \bm{s}') \ge h \}.
\end{equation*}

Next, the reachable and returnable sets are defined. Even if a state is in $S^\alpha_t$, it may be surrounded by the unsafe states. Given a set $\mathcal{X}$, the set that is reachable from $\mathcal{X}$ in one step is
\[
R^\alpha_{\text{reach}}(\mathcal{X}) = \mathcal{X}\cup\{\bm{s} \in \mathcal{S} \mid \exists \bm{s}' \in \mathcal{X}, a \in \mathcal{A}: \bm{s}=f(\bm{s}', a)\}.
\]

An agent may be unable to move to another state due to the lack of safe actions. The set of states from which the agent can return to $\bar{\mathcal{X}}$ through other set of states $\mathcal{X}$ in one step is given by 
\[
R^\alpha_{\text{ret}}(\mathcal{X},\bar{\mathcal{X}}) = \bar{\mathcal{X}} \cup \{\bm{s} \in \mathcal{X} \mid \exists a \in\mathcal{A}: f(\bm{s}, a) \in \bar{\mathcal{X}} \}.
\]
Next, the set containing all the states that can reach $\bar{\mathcal{X}}$ through an arbitrary long path in $\mathcal{X}$ is defined as $\bar{R}^{\alpha}_{\text{ret}}(\mathcal{X},\bar{\mathcal{X}})$.

Finally, the set of safe states that satisfy the reachability and returnability constraints, $\hat{S}_t$, is defined as
\begin{equation}
\label{eq:safe_set_true_turchetta}
\hat{S}^\alpha_t = \{ \bm{s} \in S^\alpha_t \mid \bm{s} \in R^\alpha_\text{reach}(\hat{S}^\alpha_{t-1}) \cap \bar{R}^{\alpha}_{\text{ret}}(S^\alpha_t, \hat{S}^\alpha_{t-1}) \}.
\end{equation}

\vspace{2mm}
\noindent
\textbf{Sampling criteria} \space\space \citet{turchetta2016safe} defined expanders, $\Xi_t^\alpha$, that might increase the number of states that are identified as safe. Then the state with the highest variance in $\Xi_t^\alpha$ was selected as the next target sample; that is, the next sample was
$
\bm{s}_t = \argmax_{\bm{s} \in \Xi_t^\alpha} w_t^\alpha(\bm{s}),
$
where $w_t^\alpha(\bm{s}) = u_t^\alpha(\bm{s}) - l_t^\alpha(\bm{s})$.

% Gaussian Process
\section{Characterization of Spatio-Temporal Safety}
\label{sec:gp}

Characterization of safety is dealt with differently here than by \citet{turchetta2016safe} in two aspects. First, we deal with time-variant safety, meaning that we must capture the similarity between times as well as that between states, which motivated us to model the environment using spatio-temporal GPs. Second, suppose that an agent considers the set of states to which a return action is possible at time $t$. If the environment is time-invariant, the agent only has to ensure that it can return to the states in $\hat{S}_{t-1}$ because the states identified as safe at time $t-1$ are surely still safe at time $t$. However, in a time-variant environment, the true safe region may shrink; that, in turn, the predicted safe region may also shrink. 
%Safe states at time $t-1$ are not necessarily safe at time $t$. 
This means that the agent must consider the states to which return is possible at $t+1$, i.e., the states in $\hat{S}_{t+1}$, since it is at time $t+1$ that a return action is taken. However, this requirement is problematic: $\hat{S}_{t}$ is defined by $\hat{S}_{t+1}$. Hence, we approximate $\hat{S}_{t+1}$ by using a conservative set.

\subsection{Spatio-Temporal Gaussian Processes} 

\begin{figure}[t]
	\centering
	\includegraphics[width=70mm]{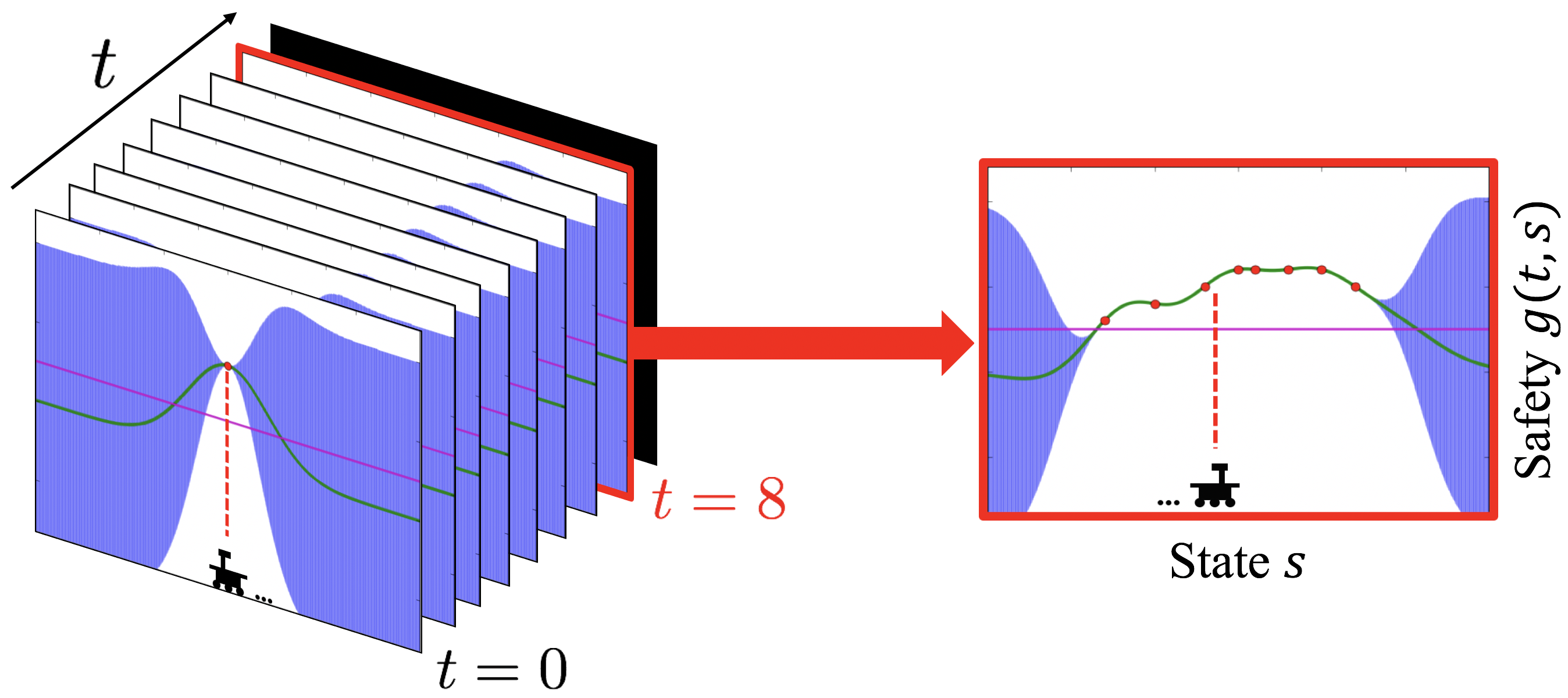}
    \caption{Conceptual image of characterization of safety. Green curve and blue band represent mean and confidence interval, respectively. Magenta line represents safety threshold $h$. Forefront figure on left is for $t=1$, and figure on right is for $t=8$. A black rectangle represents the time step with no observation.}
\label{fig:GP}
\end{figure}

As mentioned previously, we make an assumption on regularity in the form of a GP to represent the safety function, $g(\bm{t}, \bm{s})$. The GP-based predictions have uncertainty, which is represented by Gaussian distributions. Roughly speaking, an unvisited state is considered safe if the safety function value is almost certainly above a predefined threshold (i.e., with a greater probability than a predefined level). 
We assume that $\mathcal{S}$ and $\mathcal{T}$ are endowed with a positive semidefinite kernel function and that the safety function has bounded norm in the associated Reproducing Kernel Hilbert Space (RKHS). Kernel function $k$ is used to capture the similarity between states and that between times; it formalizes the assumption of safety function regularity. These concepts are shown in Figure \ref{fig:GP}. Incorporating the spatio-temporal information into the kernel function enables attaching larger weights to more recent observations. The safety function can thus be modeled as
\[
g(\bm{t}, \bm{s}) = \mathcal{GP}(\mu(\bm{t}, \bm{s}), k(\{\bm{s}, \bm{t}\}, \{\bm{s}', \bm{t}'\})).
\]
We define a new variable, $\eta=\{ \bm{t}, \bm{s} \}$, and denote the set of $\bm{\eta}$ as $\mathcal{H}$. A GP is fully specified by its mean, $\mu(\bm{\eta})$, and covariance, $k(\bm{\eta}, \bm{\eta}')$. Without loss of generality, let $\mu(\bm{\eta}) = 0$ for all $\bm{s} \in \mathcal{S}$ and $\bm{t} \in \mathcal{T}$. We model observation noise as $y = g(\bm{\eta}) + n_t$, where $n_t \sim \mathcal{N}(0, \omega_t^2)$. The posterior  over $g(\bm{\eta})$ is computed analytically on the basis of $T$ measurements for times and states $\bm{\mathcal{H}}_T = \{\{1, \bm{s}_1\}, \cdots, \{T, \bm{s}_T\}\}$ with measurements, 
$\bm{y}_T = [g(\bm{\eta}_1)+n_1, \cdots, g(\bm{\eta}_T)+n_T]^\top$.
The posterior mean, variance, and covariance are given as:
\begin{alignat}{6}
\label{eq: GP_model}
\nonumber
\mu_T(\bm{\eta}) =  &\ \bm{k}_T(\bm{\eta})^\top(\bm{K}_T+\omega_t^2\bm{I})^{-1}\bm{y}_T \\
\sigma^2_T(\bm{\eta}) = &\ k_T(\bm{\eta}, \bm{\eta}) \\
\nonumber
k_T(\bm{\eta}, \bm{\eta}') = &\ k(\bm{\eta}, \bm{\eta}') - \bm{k}_T(\bm{\eta})^\top(\bm{K}_T+\omega_t^2\bm{I})^{-1}\bm{k}_T(\bm{\eta}'),
\end{alignat}
where $\bm{k}_T(\bm{\eta}) = [k(\bm{\eta}_1,\bm{\eta}) \cdots, k(\bm{\eta}_T, \bm{\eta})]^\top$ and $\bm{K}_T$ is the positive definite kernel matrix, $[k(\bm{\eta}, \bm{\eta}')]_{\bm{\eta}, \bm{\eta}' \in \bm{\mathcal{H}}_T}$. For more details on spatio-temporal GPs, we refer the reader to \citet{soh2012online} or \citet{senanayake2016predicting}. 
We further assume that the safety function $g$ is Lipschitz continuous, with Lipschitz constant $L_s$ (for states) and $L_t$ (for time), with respect to some metric $d_s(\cdot,\cdot)$ on $\mathcal{S}$ and $d_t(\cdot,\cdot)$ on $\mathcal{T}$. This assumption is naturally satisfied using common kernel functions \citep{ghosal2006posterior}. For simplicity, we define $L$ such that $L(\bm{s},\bm{s}',\bm{t}, \bm{t}') = L_s d_s(\bm{s},\bm{s}') + L_t d_t(\bm{t}, \bm{t}')$.
For all $t$, the time step between $t-1$ and $t$ is assumed to be consistent; that is, $d_t(t',t) = |t-t'|\cdot d_t(t-1,t)$.

The kernels used in our model are composite kernels, either product kernels, $(k_s \otimes k_t)(\{\bm{s},\bm{t}\},\{\bm{s}', \bm{t}'\}) = k_s(\bm{s},\bm{s}')k_t(\bm{t},\bm{t}')$ or additive combinations, $(k_s \oplus k_t)(\{\bm{s},\bm{t}\},\{\bm{s}', \bm{t}'\}) = k_s(\bm{s},\bm{s}') + k_t(\bm{t},\bm{t}')$, where $k_s$ is the kernel for space, and $k_t$ is the one for time. In our simulation, we used a composite kernel such that
\begin{equation}
\label{eq:st_kernel}
k = (k_s \oplus k_t) \oplus (\hat{k}_s \otimes \hat{k}_t),
\end{equation}
where $\hat{k}_s$ and $\hat{k}_t$ are kernel functions for space and time, respectively. For more details on composite kernels, see \citet{krause2011contextual} or \citet{duvenaud2013structure}.

\subsection{Safe Space in Time-Variant Scenario}
\label{sec:safety}

We use the extended definition of the safe space as given by \citet{turchetta2016safe}. Likewise, to guarantee safety (with high probability), we maintain two sets. The first set is $S_t$, which contains all states that satisfy the safety constraint (\ref{eq:constraint}) with high probability. The second one, $\hat{S}_t$, includes reachability and returnability constraints in addition to the safety constraint (see Figure~\ref{fig:concept_constraints}). 

First, we formally define $S_t$, the set of states that satisfy the safety constraint. A GP model enables an agent to judge safety by providing it a confidence interval having the form $Q_t(\bm{\eta}) = [\mu_{t-1}(\bm{\eta}) \pm \beta_t^{1/2} \sigma_{t-1}(\bm{\eta})]$, where $\beta_t$ is a positive scalar specifying the required level of safety. In other words, $\beta$ inherently specifies the probability of violating the safety constraint. For more details on $\beta$, see \citet{srinivas2009gaussian}. We then consider the intersection of $Q_t$ up to iteration $t$; it is recursively defined as $C_t(\bm{\eta}) = Q_t(\bm{\eta}) \cap C_{t-1}(\bm{\eta})$, $C_0(\bm{\eta}) = [h, \infty],\ \forall \bm{s} \in S_0$. We denote the lower and upper bounds on $C_t(\bm{\eta})$ by $l_t(\bm{\eta}):=\min C_t(\bm{\eta})$ and $u_t(\bm{\eta}):=\max C_t(\bm{\eta})$, respectively. The first set, $S_t$, is defined using the Lipschitz constant $L$ (i.e., $L_s$ and $L_t$):
\begin{equation*}
\begin{split}
S_t = 
\{ \bm{s} \in \mathcal{S} \mid \ \exists \bm{s}' \in \hat{S}_{t-1}: &\ l_t(\{t-1,\bm{s}'\}) \\
& - L(\bm{s},\bm{s}',t, t-1) \ge h \}. \\
\end{split}
\end{equation*}

\noindent
\textbf{Reachability and returnability} \space\space We define the set of states that are reachable and those to which return is possible. Even if a state is in $S_t$, it may not be reachable without visiting unsafe states first. For the reachable set, we use the same definition as \citet{turchetta2016safe}. Given $\mathcal{X}$, the set of states reachable from $\mathcal{X}$ in one step is
\[
R_{\text{reach}}(\mathcal{X}) = R^\alpha_{\text{reach}}(\mathcal{X}).
\]

Next, an agent may be ``trapped'' in a state due to the lack of safe actions. As for the returnable set, we use different definition from $R^\alpha_{\text{ret}}(\mathcal{X})$. Given a set $\mathcal{X}$, the set that is returnable to $\mathcal{X}$ with one step is given as 
\[
R_{\text{ret}}(\mathcal{X}) = \mathcal{X} \cup \{\bm{s}\in\mathcal{S} \mid \exists a \in\mathcal{A}: f(\bm{s}, a) \in \mathcal{X} \}.
\]

Finally, the set of safe states is defined. Safe states must 1) satisfy the safety constraint, 2) be reachable from $\hat{S}_{t-1}$, and 3) be returnable to $\hat{S}_{t+1}$; that is, $\hat{S}_t$ is denoted as
\begin{equation}
\label{eq:safe_set_true}
\hat{S}_t = \{ \bm{s} \in S_t \mid \bm{s} \in R_\text{reach}(\hat{S}_{t-1}) \cap R_\text{ret}(\hat{S}_{t+1})\}.
\end{equation}
Observe that the returnable set is written as $R^\text{ret}(\hat{S}_{t+1})$. In a time-variant environment, an agent must be able to visit $\hat{S}_{t+1}$ since $\hat{S}_{t-1}$ and $\hat{S}_{t}$ may no longer be safe. However, defining $\hat{S}_t$ using $\hat{S}_{t+1}$ is obviously problematic.

\begin{figure}[t]
\centering
\subfigure[]{\includegraphics[width=3.3cm]{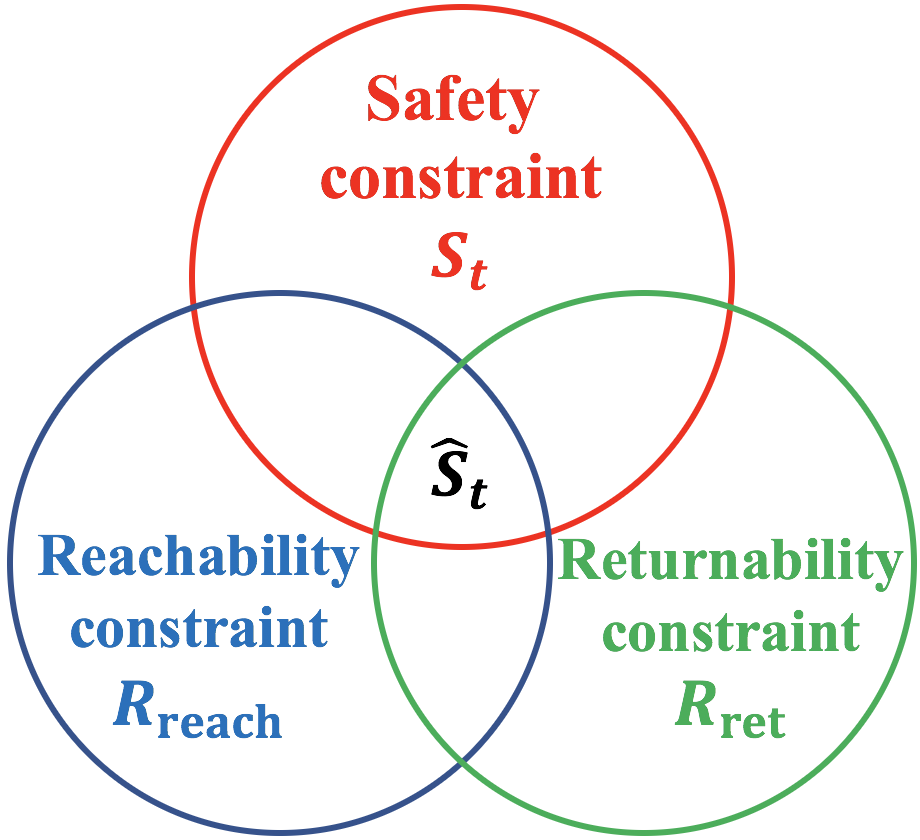}
\label{fig:concept_constraints}}
\hspace{1mm}
\subfigure[]{\includegraphics[width=4.6cm]{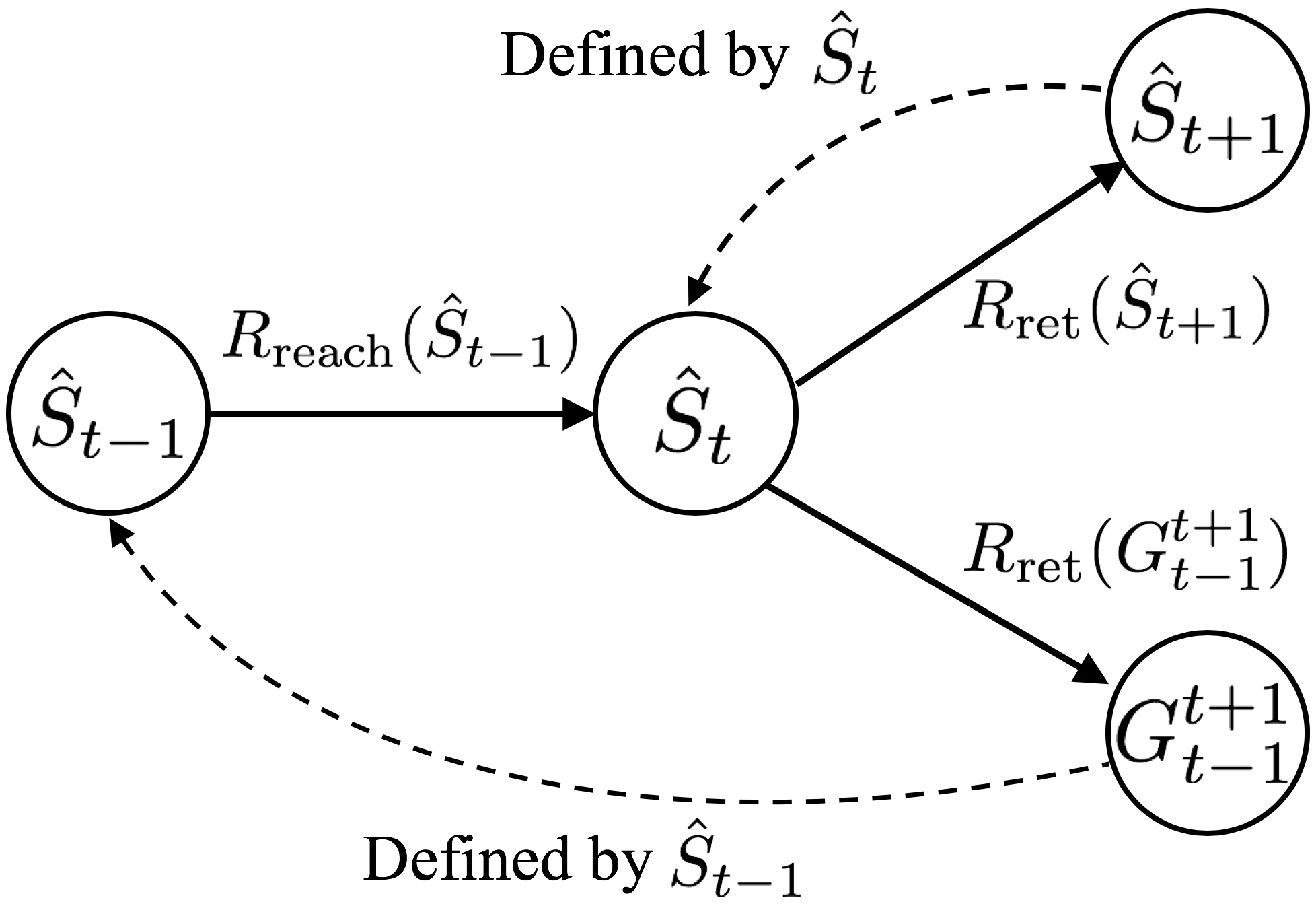}
\label{fig:concept_sets}}
\caption{(a) Safe space is characterized by three constraints. (b) Conceptual image of the relations between each set. It is problematic to define $\hat{S}_{t}$ using $\hat{S}_{t+1}$; hence, we use $G_{t-1}^{t+1}$ instead to define $\hat{S}_t$.}
\end{figure}

\vspace{2mm}
\noindent
\textbf{Approximation of $\bm{\hat{S}_{t+1}}$} \space\space We cannot directly solve (\ref{eq:safe_set_true}), but nonetheless we have to consider returnability in order to avoid the situation in which an agent is unable to take any action at time $t$. Suppose we have a set of observations, $\bm{y}_{t-1}$. For any pair of time steps, $\tau$ and $\tau' (< \tau)$, we define a conservative set of safe states,
\begin{equation*}
\begin{split}
G_{\tau'}^\tau(\bm{y}_{t-1}) =
\ \{ \bm{s} \in \mathcal{S} \mid &\ \exists \bm{s}' \in \hat{S}_{\tau'}:  \\
&\ l_t(\{\tau',\bm{s}'\}) - L(\bm{s},\bm{s}', \tau, \tau') \ge h \}. 
\end{split}
\end{equation*}
Intuitively, $G$ is an extended definition of $S$ in that $G$ takes into account the satisfaction of the safety constraint several steps into the future. Hence, in (\ref{eq:safe_set_true}), we make an approximation and replace $\hat{S}_{t+1}$ by $G_{t-1}^{t+1}$; that is,
\begin{equation*}
\label{eq:safe_set_approx}
\hat{S}_t \approx \{ \bm{s} \in S_t \mid \bm{s} \in R_\text{reach}(\hat{S}_{t-1}) \cap R_\text{ret}(G_{t-1}^{t+1}(\bm{y}_{t-1}))\}.
\end{equation*}
Under the assumption of Lipschitz continuity, $G_{t-1}^{t+1}$ can be identified as safe on the basis of observations until time $t-1$, and taking into account $R_\text{ret}(G_{t-1}^{t+1})$ enables us to avoid the situation in which an agent cannot move to another state. A conceptual image is shown in Figure~\ref{fig:concept_sets}. 
%

% Algorithm
\section{Approach}
\label{sec:alg}

\begin{algorithm}[t]
\caption{\textbf{\space \algo}}
\label{algorithm1}
\begin{small}
\begin{algorithmic}[1]
\STATE{\textbf{Inputs:} MDP $\mathcal{M} = \langle \mathcal{S}, \mathcal{T}, \mathcal{A}, f(\bm{s}, a), g(t, \bm{s}) \rangle$,  Safety threshold $h$, Lipschitz constraints $L_s$ and $L_t$, Initial safe set $S_0$}
\STATE{Construct spatio-temporal kernel}
\STATE $C_0(s) \leftarrow [h, \infty), \quad \forall s \in S_0$ 
\FOR {$i = 1, 2, \cdots$}
\STATE $\hat{S}_t \leftarrow \{ \bm{s} \in S_t \mid \bm{s} \in R_\text{reach}(\hat{S}_{t-1}) \cap R_\text{ret}(G_{t-1}^{t+1})\}$
\STATE $\Xi_t \leftarrow \{ s \in \hat{S}_t \mid \xi_t(s) > 0 \}$
\STATE $\Psi_t = \{ \bm{s} \in \mathcal{S} \mid \exists a \in \mathcal{A}: \bm{s} = f(\bm{s}_{t-1}, a)\}$
\IF {$\Xi_t \cap \Psi_t = \emptyset$}
\STATE $\bm{s}_t \leftarrow \argmax_{\bm{s} \in \Xi_t \cap \Psi_t} \left( \mu_t(\{t,\bm{s}\}) + p \cdot w_t(\{t,\bm{s}\}) \right)$
\ELSE
\STATE $\bm{s}_t \leftarrow \argmax_{\bm{s} \in \hat{S}_t \cap \Psi_t} \left( \mu_t(\{t,\bm{s}\}) + p \cdot w_t(\{t,\bm{s}\}) \right)$
\ENDIF
\STATE{$y_t \leftarrow g(t, \bm{s}_t) + n_t$}
\STATE Update GP model using $\bm{s}_{t}$ and $\bm{y}_{t}$
\ENDFOR
\end{algorithmic}
\end{small}
\end{algorithm}

We start by giving a high level overview of our \algo\ algorithm (see Algorithm~\ref{algorithm1}). At each iteration, the agent updates the safe space (Line~5) by using the GP model. Given the safe set, $\hat{S}_t$, the agent calculates expanders that might enhance the number of states identified as safe (Line~6). Then, a state with the maximum weighted summation of the mean and confidence interval of the safety function is chosen as the next target sample (Lines~8-12). Finally, the agent observes the safety function value of the current state and updates the GP model (Lines~13 and 14).

Our algorithm maximizes the lower bound of the cumulative number of safe states in the next multiple time steps, as shown by the lemmas and theorem below. Hereinafter, we explicitly denote the underlying observations as independent variables. Recall that our objective here is to find a good policy for efficiently expanding the safe space in the environment with time-variant and a priori unknown safety, as expressed by~(\ref{eq:objective_func}). Given that $q(i, \bm{s}_i; \bm{y}_{i-1})$ corresponds to $|\hat{S}_i(\bm{y}_{i-1})|$, the following lemmas hold.

\begin{lemma}
\label{theorem_observation}
The following inequality holds:
\begin{alignat*}{2}
J(t, \bm{s}_t; \bm{y}_{i-1})
\ge |\hat{S}_t(\bm{y}_{t-1})| + \sum_{i=t+1}^{N}  |\hat{S}_i(\bm{y}_{t})|.
\end{alignat*}
\end{lemma}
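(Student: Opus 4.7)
The plan is to reduce the claim to a per-step monotonicity statement about the safe sets with respect to the amount of observation data, and then to establish that monotonicity. First I would unfold $J$ using the identification $q(i,\bm{s}_i;\bm{y}_{i-1}) = |\hat{S}_i(\bm{y}_{i-1})|$ noted just above the lemma and peel off the $i=t$ summand, obtaining
\[
J(t,\bm{s}_t;\bm{y}_{i-1}) = |\hat{S}_t(\bm{y}_{t-1})| + \sum_{i=t+1}^{N}|\hat{S}_i(\bm{y}_{i-1})|.
\]
Matching this with the right-hand side of the lemma, the claim reduces to the per-term bound $|\hat{S}_i(\bm{y}_{i-1})| \ge |\hat{S}_i(\bm{y}_t)|$ for every $i\in\{t+1,\ldots,N\}$, which I would upgrade to the stronger set inclusion $\hat{S}_i(\bm{y}_t) \subseteq \hat{S}_i(\bm{y}_{i-1})$.

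The core step is to show that enlarging the observation history from $\bm{y}_t$ to $\bm{y}_{i-1}$ can only enlarge the identified safe set. Because $C_j(\bm{\eta}) = Q_j(\bm{\eta})\cap C_{j-1}(\bm{\eta})$ is a running intersection of confidence intervals, the extra observations $y_{t+1},\ldots,y_{i-1}$ simply intersect additional bands into the construction, so $C_j(\bm{y}_{i-1}) \subseteq C_j(\bm{y}_t)$ pointwise in $\bm{\eta}$, and consequently the lower bound satisfies $l_j(\cdot;\bm{y}_{i-1}) \ge l_j(\cdot;\bm{y}_t)$. Plugging this inequality into the Lipschitz-based definitions of $S_i$ and $G_{i-1}^{i+1}$ gives $S_i(\bm{y}_t)\subseteq S_i(\bm{y}_{i-1})$ and $G_{i-1}^{i+1}(\bm{y}_t) \subseteq G_{i-1}^{i+1}(\bm{y}_{i-1})$.

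I would then propagate the inclusion to $\hat{S}_i$ by induction on $i$, with the base case $i=t+1$ being immediate since both sides use the same observation vector. The recursion
\[
\hat{S}_i(\bm{y}) = \{\bm{s}\in S_i(\bm{y}) \mid \bm{s} \in R_\text{reach}(\hat{S}_{i-1}(\bm{y}))\cap R_\text{ret}(G_{i-1}^{i+1}(\bm{y}))\}
\]
is monotone in each of its set arguments, so combining the induction hypothesis $\hat{S}_{i-1}(\bm{y}_t)\subseteq\hat{S}_{i-1}(\bm{y}_{i-1})$ with the monotonicity of $S$ and $G$ in observations yields $\hat{S}_i(\bm{y}_t)\subseteq\hat{S}_i(\bm{y}_{i-1})$. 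Taking cardinalities and summing over $i=t+1,\ldots,N$, then adding back $|\hat{S}_t(\bm{y}_{t-1})|$, recovers the lemma.

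The part I expect to require the most care is the bookkeeping through the returnability layer: because $\hat{S}_i$ is defined via the auxiliary set $G_{i-1}^{i+1}$, which itself depends on $\hat{S}_{i-1}$ and on the Lipschitz-extended lower bounds, one must verify in a single induction that the GP-level monotonicity simultaneously passes through $G_{i-1}^{i+1}$, $R_\text{ret}(\cdot)$, $R_\text{reach}(\cdot)$, and the outer intersection with $S_i$. Each of these operators is individually monotone in its set arguments, so the induction should go through cleanly, but the simultaneous propagation is where a naive argument could easily overlook a dependency.
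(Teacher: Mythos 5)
Your proposal is correct and follows essentially the same route as the paper's proof: peel off the $i=t$ term, reduce to the per-term monotonicity $\hat{S}_i(\bm{y}_t)\subseteq\hat{S}_i(\bm{y}_{i-1})$, and derive it from the fact that additional observations only shrink the confidence intervals $C_j$ (hence raise $l_j$), propagated through the recursive definitions of $S_i$, $G_{i-1}^{i+1}$, $R_\text{reach}$, and $R_\text{ret}$. The only cosmetic difference is that the paper telescopes one observation at a time, $q(i,\bm{s}_i;\bm{y}_{i-1})\ge q(i,\bm{s}_i;\bm{y}_{i-2})\ge\cdots\ge q(i,\bm{s}_i;\bm{y}_t)$, whereas you run a single induction on $i$; the dependency you flag at the end (that $S_i$ and $G_{i-1}^{i+1}$ themselves depend on $\hat{S}_{i-1}$, so the induction hypothesis must be invoked there too) is exactly what the paper handles via its Lemmas on $S$, $G$, and $\hat{S}$.
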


\begin{lemma}
\label{lem:hat_S_recursive2}
For all $i = [t, N]$, the following inequality holds:
\begin{alignat*}{2}
|\hat{S}_i(\bm{y}_t)| 
\ge &\ |\hat{S}_{t}(\bm{y}_t) \cap G_{t}^{i+1}(\bm{y}_t)|. 
\end{alignat*}
\end{lemma}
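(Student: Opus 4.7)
I plan to deduce the cardinality inequality from the stronger set inclusion
\[
\hat{S}_t(\bm{y}_t)\cap G_t^{i+1}(\bm{y}_t)\subseteq\hat{S}_i(\bm{y}_t),
\]
which I would prove by induction on $i$, the base case $i=t$ being immediate. Two elementary monotonicity facts anchor the argument: since $d_t(j,t)$ is nondecreasing in $j$, the defining inequality of $G$ becomes more stringent as the horizon grows, so $G_t^{j+1}\subseteq G_t^{j}$; and $R_{\text{reach}}(\mathcal{X})\supseteq\mathcal{X}$, $R_{\text{ret}}(\mathcal{X})\supseteq\mathcal{X}$ by construction.

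For the inductive step at $i$, pick $\bm{s}\in\hat{S}_t(\bm{y}_t)\cap G_t^{i+1}(\bm{y}_t)$ and let $\bm{s}_0\in\hat{S}_t$ be the $G_t^{i+1}$-witness, so
\[
l_t(\{t,\bm{s}_0\})\;\ge\;h+L_s d_s(\bm{s},\bm{s}_0)+L_t d_t(i+1,t).
\]
Because $d_t(i,t)\le d_t(i+1,t)$, this same inequality lets $\bm{s}_0$ serve as its own $G_t^{i}$-witness, placing $\bm{s}_0\in\hat{S}_t\cap G_t^{i}$, so the induction hypothesis gives $\bm{s}_0\in\hat{S}_{i-1}$. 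Monotonicity of $G$ together with the IH applied to $\bm{s}$ itself similarly yields $\bm{s}\in\hat{S}_{i-1}$.

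It then remains to verify the three defining conditions of $\hat{S}_i$. Reachability $\bm{s}\in R_{\text{reach}}(\hat{S}_{i-1})$ is immediate from $\bm{s}\in\hat{S}_{i-1}$. For both the safety condition $\bm{s}\in S_i$ and the returnability condition $\bm{s}\in R_{\text{ret}}(G_{i-1}^{i+1})$ I would use $\bm{s}_0\in\hat{S}_{i-1}$ as the witness, combined with the additivity $d_t(i+1,t)=d_t(i-1,t)+d_t(i+1,i-1)$ and a Lipschitz propagation of the GP lower bound of the form $l_t(\{i-1,\bm{s}_0\})\ge l_t(\{t,\bm{s}_0\})-L_t d_t(i-1,t)$; substituting into the displayed chain delivers $l_t(\{i-1,\bm{s}_0\})-L_s d_s(\bm{s},\bm{s}_0)-L_t d_t(i+1,i-1)\ge h$, which certifies both conditions at once since $d_t(i,i-1)\le d_t(i+1,i-1)$.

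The main obstacle is precisely this last Lipschitz propagation step: transferring the GP lower confidence bound from $(t,\bm{s}_0)$ to $(i-1,\bm{s}_0)$ up to a $L_t d_t(i-1,t)$ slack is not immediate from the GP posterior formulas in isolation, but rests on the high-probability sandwich $l_t\le g\le u_t$ guaranteed by the choice of $\beta_t$ together with the Lipschitz continuity of $g$ in time asserted in Section~\ref{sec:gp}. Once this transfer is available, the rest is mechanical bookkeeping with the Lipschitz distance terms and the induction closes cleanly.
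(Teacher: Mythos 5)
Your overall strategy---establishing the set inclusion $\hat{S}_t(\bm{y}_t)\cap G_t^{i+1}(\bm{y}_t)\subseteq\hat{S}_i(\bm{y}_t)$ by induction on $i$---targets exactly the inclusion the paper proves, but your inductive step hinges on a claim that does not hold: the ``Lipschitz propagation'' $l_t(\{i-1,\bm{s}_0\})\ge l_t(\{t,\bm{s}_0\})-L_t d_t(i-1,t)$. The Lipschitz assumption is on the true safety function $g$, not on the confidence bounds. The high-probability sandwich $l\le g\le u$ combined with Lipschitz continuity of $g$ lets you lower-bound $g(i-1,\bm{s}_0)$ by $l_t(\{t,\bm{s}_0\})-L_t d_t(i-1,t)$, but it says nothing about $l_t(\{i-1,\bm{s}_0\})$ itself, which is built from $\mu-\beta^{1/2}\sigma$ (intersected over iterations) and can sit arbitrarily far below $g(i-1,\bm{s}_0)$ at a point the GP posterior is still uncertain about. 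So the step that was supposed to certify $\bm{s}\in S_i$ and $\bm{s}\in R_{\text{ret}}(G_{i-1}^{i+1})$ does not go through---and you correctly flagged it yourself as the main obstacle; it is not a gap that the stated assumptions can close.

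The paper avoids this entirely by never moving the anchor point of $l$. Its one-step result (Lemma~\ref{lem:hat_S_recursive} in the supplement) is $\hat{S}_i(\bm{y}_j)\supseteq\hat{S}_{i-1}(\bm{y}_j)\cap G_{i-1}^{i+1}(\bm{y}_j)$, which follows from only three set-level facts: $R_{\text{reach}}(\mathcal{X})\supseteq\mathcal{X}$ and $R_{\text{ret}}(\mathcal{X})\supseteq\mathcal{X}$ (Lemma~\ref{lem:reach_ret}), together with $S_i\supseteq G_{i-1}^{i+1}$ (Lemma~\ref{lem:S_G}(i)). That last containment is purely a comparison of the slack terms $L(\bm{s},\bm{s}',\cdot,\cdot)$, because both $S_i$ and $G_{i-1}^{i+1}$ are defined via $l$ evaluated at the \emph{same} point $\{i-1,\bm{s}'\}$ with $\bm{s}'\in\hat{S}_{i-1}$, and $d_t(i,i-1)\le d_t(i+1,i-1)$; no statement about how $l$ varies across time steps is ever needed. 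The recursion is then telescoped down to $\hat{S}_t\cap G_t^{i+1}$ using the intersection identity $G_{i-2}^{i}\cap G_{i-1}^{i+1}=G_{i-2}^{i+1}$ (Lemma~\ref{lem:S_G}(ii)). To repair your argument, restructure the inductive step around these ingredients---in particular, pick the witness for membership in $S_i$ and $R_{\text{ret}}(G_{i-1}^{i+1})$ from $\hat{S}_{i-1}$ with $l$ anchored at time $i-1$, rather than attempting to transport the lower confidence bound from time $t$.
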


These lemmas come from our desire to express the objective function using only $\bm{y}_{t-1}$ and $\bm{y}_t$. Finally, we obtain the following theorem on the lower bound of the objective function using Lemmas~\ref{theorem_observation} and~\ref{lem:hat_S_recursive2}.
\begin{theorem}
\label{theorem:lower_bound_J}
The following equality holds:
\begin{alignat*}{2}
\min_{y_ {t+1}, \cdots, y_N} J(t, \bm{s}_t; \bm{y}_{t-1}) = |\hat{S}_t(\bm{y}_{t-1})| + \sum_{i=t+1}^{N}  M_i(\bm{y}_t),
\label{eq:V_M}
\end{alignat*}
where $M_i$ is defined as
$M_i := |\hat{S}_{t} \cap G_{t}^{i+1}|$. 
\end{theorem}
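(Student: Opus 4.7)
The plan is to chain Lemmas~\ref{theorem_observation} and~\ref{lem:hat_S_recursive2} to obtain a pointwise lower bound on $J$ whose right-hand side does not reference $y_{t+1},\ldots,y_N$, and then exploit that to trivialize the minimization. Starting from Lemma~\ref{theorem_observation}, I have $J(t,\bm{s}_t;\bm{y}_{t-1}) \ge |\hat{S}_t(\bm{y}_{t-1})| + \sum_{i=t+1}^{N} |\hat{S}_i(\bm{y}_t)|$. Applying Lemma~\ref{lem:hat_S_recursive2} term by term to each summand gives $|\hat{S}_i(\bm{y}_t)| \ge |\hat{S}_t(\bm{y}_t) \cap G_t^{i+1}(\bm{y}_t)| = M_i(\bm{y}_t)$, so that $J(t,\bm{s}_t;\bm{y}_{t-1}) \ge |\hat{S}_t(\bm{y}_{t-1})| + \sum_{i=t+1}^{N} M_i(\bm{y}_t)$ holds pointwise over realizations of future data.

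Next I would take $\min_{y_{t+1},\ldots,y_N}$ of both sides. The key structural observation is that every quantity on the right depends only on $\bm{y}_{t-1}$ and $\bm{y}_t = [y_1,\ldots,y_t]$: the set $\hat{S}_t(\bm{y}_{t-1})$ is pinned down by observations through time $t-1$, and each $M_i(\bm{y}_t)$ is by definition a function of the Lipschitz extension from $\hat{S}_t(\bm{y}_t)$, which itself references data only up through time $t$. Hence the right-hand side is constant in the minimization variables, and the inequality lifts verbatim to $\min_{y_{t+1},\ldots,y_N} J(t,\bm{s}_t;\bm{y}_{t-1}) \ge |\hat{S}_t(\bm{y}_{t-1})| + \sum_{i=t+1}^{N} M_i(\bm{y}_t)$.

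Promoting this one-sided bound to the claimed equality is the delicate part. The approach I would take is to exhibit an adversarial future observation sequence that simultaneously tightens both Lemmas~\ref{theorem_observation} and~\ref{lem:hat_S_recursive2}. A natural candidate is to place each $y_i$ for $i>t$ just above the threshold $h$, so that the intersections $C_i = Q_i \cap C_{i-1}$ do not raise any lower bound $l_i$ beyond what Lipschitz propagation from $\bm{y}_t$ already certifies, collapsing $\hat{S}_i(\bm{y}_{i-1})$ down to exactly $\hat{S}_t(\bm{y}_t) \cap G_t^{i+1}(\bm{y}_t)$. The hard part will be verifying that this adversarial sequence is admissible under the GP noise model and that both lemma inequalities tighten \emph{simultaneously} under the same realization, rather than only separately. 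If, as the subsequent algorithmic use of the bound suggests, the statement is intended in the weaker sense as the worst-case surrogate that \algo\ actually optimizes, then the first two paragraphs already deliver the claim and no tightness construction is needed.
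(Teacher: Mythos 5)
Your first two paragraphs reproduce the paper's own proof essentially verbatim: chain Lemma~\ref{theorem_observation} with Lemma~\ref{lem:hat_S_recursive2} applied to each summand, and note that the resulting bound $J(t,\bm{s}_t;\bm{y}_{t-1}) \ge |\hat{S}_t(\bm{y}_{t-1})| + \sum_{i=t+1}^{N} M_i(\bm{y}_t)$ has a right-hand side depending only on observations up to time $t$, so it survives the minimization over $y_{t+1},\ldots,y_N$. That is exactly the argument in the supplemental material (you are in fact slightly more careful, since the paper never explicitly discusses why the minimum can be taken).

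Your third paragraph correctly identifies the one genuine issue, and you should trust your instinct there: the paper never proves the reverse inequality. Its proof stops at the lower bound and declares the theorem established; no adversarial observation sequence is exhibited, and no argument is given that the inclusions underlying Lemmas~\ref{theorem_observation} and~\ref{lem:hat_S_recursive2} can be made tight simultaneously by a single admissible realization of the future data. So the stated ``equality'' is, as far as the paper's own argument goes, only a lower bound --- precisely the ``weaker sense'' you describe in your closing sentence, and the sense in which \algo\ actually uses the result as a worst-case surrogate to maximize. Do not invest in the tightness construction you sketch: placing each $y_i$ just above $h$ is not obviously consistent with the bounded-RKHS-norm and Lipschitz assumptions on $g$, and it does not prevent the intersected confidence intervals $C_i$ from raising lower bounds at neighboring states through the spatial kernel, so that route is not guaranteed to close the gap and is in any case absent from the paper.
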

Note that $J$ has a lower bound characterized by only $\bm{y}_{t-1}$ and $\bm{y}_t$. Proofs of Lemma~\ref{theorem_observation}, \ref{lem:hat_S_recursive2} and Theorem~\ref{theorem:lower_bound_J} are given in the supplemental material. Now, the problem to solve is, \textit{what is the optimal state to obtain the observation, $y_t$?}.

Let us carefully observe $M_i(\bm{y}_t)$. Once we transform $\sum_{i=t+1}^{N} M_i$ into $M_{t+1} \cdot \sum_{i=t+1}^{N} M_i/M_{t+1}$, we get a clearer view for optimization. First, of the two components of $M_{t+1}$, $\hat{S}_t$ is deterministic while $G_t^{t+2}$ is not unknown at this moment (because an agent has not yet obtained the observation at time $t$, $y_t$). As for $M_i, \forall i = [t+2, N]$; since they depend on $M_{i-1}$, it is difficult to directly maximize $M_i$. However, $M_i/M_{t+1}$ represents the ratio of the number of states within $\hat{S}_t$ that satisfy $l_t-L(\bm{s},\bm{s}',t, i) \ge h$ with respect to one that satisfies $l_t-L(\bm{s},\bm{s}',t,t+2) \ge h$. Therefore, to maximize $J$, we choose as the next target sample one that we can expect its observation to 1) expand $M_{t+1}(\bm{y}_t)$ and 2) enhance $M_i(\bm{y}_t)/M_{t+1}(\bm{y}_t)$ (i.e., choose a state whose safety function value is expected to be high).

As a means to achieve the first objective, we utilize \textit{expanders} (explained shortly) to enhance $M_{t+1}(\bm{y}_t)$ when an agent observes the safety function value. For the second one, we choose as the next target sample the one with the maximum value of the weighted summation of the mean and the width of the confidence interval.

\vspace{2mm}
\noindent
\textbf{Expander} \space\space As mentioned above, to enhance $M_{t+1}(\bm{y}_t)$, we utilize \textit{expanders} as in \citet{sui2015safe} or \citet{turchetta2016safe}. We use the uncertainty estimate in the GP to define an optimistic set of expanders,
\[
\Xi_t = \{ \bm{s} \in \hat{S}_t \mid \xi_t(\bm{s}) > 0 \},
\]
where $\xi$ is defined as
\begin{equation*}
\xi_t(\bm{s}) = | \{  \bm{s}' \in \mathcal{S} \setminus S_t \mid u_{t}(\{t, \bm{s}\}) - L(\bm{s}, \bm{s}', t, t+2) \ge h\}|.
\end{equation*}
The function $\xi_t(\bm{s})$ is positive whenever an optimistic measurement at $\bm{s}$ (i.e., the upper confidence bound) enables determination that a previously unsafe state satisfies the safety constraint. Intuitively, sampling $\bm{s}$ might lead to the expansion of $G_t^{t+2}$ (i.e., $M_{t+1}$). 

\begin{table*}[t]
\begin{center}
	\begin{tabular}{cccccc}
	\hline\hline
		& Normalized RMSE & Failure & Accuracy & Precision & Recall \\ \hline
		\algo & 1.00 $\pm$ 0.00 & \textbf{0} & \textbf{0.56 $\pm$ 0.12} & \textbf{1.00 $\pm$ 0.00} & \textbf{0.43 $\pm$ 0.10} \\ 
		Random & 1.98 $\pm$ 0.33 & 100 & $-$ & $-$ & $-$ \\ 
	    Unsafe & 0.72 $\pm$ 0.14 & 100 & $-$ & $-$ & $-$ \\ 
	    Ignore time-variance & 1.43 $\pm$ 0.29 & 36 & 0.34 $\pm$ 0.15 & 0.78 $\pm$ 0.09 & 0.28 $\pm$ 0.13\\ 
       No cross-covariance  & 1.50 $\pm$ 0.22 & 25 & 0.43 $\pm$ 0.14 & 0.85 $\pm$ 0.13 & 0.36 $\pm$ 0.13 \\
	\hline\hline
	\end{tabular}
    \end{center}
    \vspace{-1mm}
    \caption{\label{tab:simpleGridWorld1}Simulation results for simple grid world (100 times Monte-Carlo simulation). \algo\ achieved higher accuracy, precision, and recall without any failure.}
    \vspace{-1mm}
\end{table*}

\vspace{2mm}
\noindent
\textbf{Sampling criteria} \space \space We assume that an agent observes the safety function value at every visited state.\footnote{The algorithm proposed by \citet{turchetta2016safe} selects as the next target sample the state with the highest variance under the assumption that an agent does not observe the unknown function value on the way to the target sample. This assumption provides a rich theory for exploring the full region of safely reachable states. However, we consider it is more reasonable to assume that the unknown function value can be observed at every visited state.} This assumption is particularly significant in dealing with an environment in which the safety function value may change while the agent moves to the next target sample. Hence, we now consider the motion of the agent for one time step. We first define the set of candidates to be the next visited state as
$\Psi_t = \{ \bm{s} \in \mathcal{S} \mid \exists a \in \mathcal{A}: \bm{s} = f(\bm{s}_{t-1}, a)\}.$

Our goal is to maximize the number of safe states while guaranteeing safety. We do this by using GP posterior uncertainty regarding the states in $\Xi_t \cap \Psi_t$. At each iteration $t$, we select as the next target sample the state in $\Xi_t \cap \Psi_t$ with the maximum weighted summation of the mean and the width of the confidence interval: 
\begin{equation}
\bm{s}_t = \argmax_{\bm{s} \in \Xi_t \cap \Psi_t} \left( \mu_t(\{t,\bm{s}\}) + p \cdot w_t(\{t,\bm{s}\}) \right),
\end{equation}
where $w_t(\bm{\eta}) = u_t(\bm{\eta}) -l_t(\bm{\eta})$, and $p$ is a weighted coefficient. If $\Xi_t \cap \Psi_t$ is empty, we sample a new state within $\hat{S}_t \cap \Psi_t$. This sampling criteria is justified because we can gain the most information from the state for which we are the most uncertain, and states in $\Xi_t \cap \Psi_t$ can enlarge the safe set. Also, all the states in $\Xi_t \cap \Psi_t$ are guaranteed to be safe. In addition, by additionally considering the mean value, $\mu$, the agent is encouraged to visit a state that is likely to be still safe in the next multiple time steps.

\section{Theoretical Results}
\label{sec:theo}

We use the correctness of the confidence intervals, $C_t(\bm{\eta})$, as the metric for the safety guarantee. That is, the true safety function value, $g(\bm{\eta})$, must be within $C_t(\bm{\eta})$. We adjust the conservativeness by tuning scaling factor $\beta_t$. Though this paper addresses the planning problem, like \citet{srinivas2009gaussian}, who studied appropriate selection of $\beta_t$ for the multi-armed bandit problem (i.e., the sampling problem), we use
\begin{equation}
\label{eq:beta}
\beta_t = 2B + 300 \gamma_t \log^3(t/\delta),
\end{equation}
where $B$ is the bound on the RKHS norm of the safety function, $\gamma_t$ is the maximum mutual information that can be gained about $g(\cdot)$ from $t$ noisy observations (i.e., $\gamma_t = \max_{|A|\le t} I (g; \bm{y}_A )$), and $\delta$ is the probability of visiting unsafe states. For more details on the bounds on $\gamma_t$, we refer the reader to \citet{srinivas2009gaussian}. As for the spatio-temporal kernel function, the existence of an upper bound on the information gain is ensured by the following lemma. 
\begin{lemma}
\label{upper_IG}
Let $\bm{k}_s$ and $\bm{k}_t$ be a finite number of kernel functions on $\mathcal{S}$ and $\mathcal{T}$, respectively. We denote an arbitrary composite kernel resulting from $\bm{k}_s$ and $\bm{k}_t$ using product kernel operators (i.e., $\otimes$) or additive combination (i.e., $\oplus$) by $k_F$. Then $k_F$ has a bound of information gain.
\end{lemma}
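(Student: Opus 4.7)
\medskip

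\noindent\textbf{Proof plan.} The plan is to argue by structural induction on the way $k_F$ is built from the base kernels in $\bm{k}_s$ and $\bm{k}_t$, using the fact that each base kernel already admits a sublinear (in $t$) bound on its maximum information gain. For the base case, I would invoke the bounds established by \citet{srinivas2009gaussian} for the commonly used kernels (squared exponential, Mat\'ern, linear): in each case $\gamma_t$ grows only polylogarithmically or sub-linearly in $t$, so each of the finitely many elements of $\bm{k}_s$ and $\bm{k}_t$ has a known explicit bound $\gamma_t(k) \le B_k(t)$.

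The inductive step is to show that both composition operators preserve the existence of such a bound. First, I would handle additive composition: given two kernels $k_1,k_2$ with $\gamma_t(k_1)\le B_1(t)$ and $\gamma_t(k_2)\le B_2(t)$, a standard mutual-information decomposition (used, e.g., by \citet{krause2011contextual} for contextual GPs) yields a bound of the form $\gamma_t(k_1\oplus k_2)\le \gamma_t(k_1)+\gamma_t(k_2)+O(\log t)$, because the GP with sum kernel is the sum of two independent GPs and the joint information is at most the sum of the marginal informations plus the entropy of the mixing. For product composition, I would use the bound of \citet{krause2011contextual} for the tensor product of kernels on a product domain, namely $\gamma_t(k_1\otimes k_2)\le |\mathcal{D}_1|\,\gamma_t(k_2)$ (or the symmetric version), exploiting the finiteness of $\mathcal{S}$ and $\mathcal{T}$ in our MDP setting so that one of the two factor domains can be used as the ``context'' with finite cardinality. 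Combining these two operator-level bounds, a straightforward induction on the expression tree defining $k_F$ produces an explicit (albeit possibly large) upper bound on $\gamma_t(k_F)$ that is still sub-linear in $t$.

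The main obstacle I expect is the product-kernel step: the additive case follows cleanly from the sub-additivity of mutual information, but $\gamma_t$ for a product kernel does not in general admit a clean decomposition in terms of its factors without extra structural assumptions. The key that makes it go through here is that the spatial and temporal index sets $\mathcal{S}$ and $\mathcal{T}$ are discrete and finite (as assumed throughout the MDP formulation) and that only finitely many base kernels are composed, so the ``context cardinality'' appearing in the \citet{krause2011contextual} product bound is finite and the bound collapses to a finite multiple of a factor-level $\gamma_t$. I would therefore state the discreteness of $\mathcal{S}$ and $\mathcal{T}$ as a hypothesis used in the proof, verify the two operator-level bounds once each, and then close with a one-line induction on the number of $\oplus$/$\otimes$ operators used to build $k_F$.
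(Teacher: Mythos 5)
Your proposal matches the paper's own proof in essence: both arguments reduce the two composition operators to Theorems 2 and 3 of \citet{krause2011contextual} (the product bound via a finite rank/cardinality of one factor, the additive bound via sub-additivity of the information gain up to an $O(\log T)$ term) and then close by induction over the finitely many $\oplus$/$\otimes$ operations in the expression defining $k_F$. The only cosmetic differences are that you phrase the product step through the cardinality of the finite domain rather than the rank $d$ of the context kernel matrix (a coarser instance of the same theorem) and that you make the base-case bounds for the individual kernels explicit, which the paper leaves implicit.
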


Since the GP-based prediction is stochastic, being in $S_t$ means the constraint on safety is fulfilled with high probability. Given that $\beta$ is chosen as in (\ref{eq:beta}), we now present a theorem on the safety guarantee.

\begin{theorem}
\label{theorem:safe_guarantee}
Assume that $g$ is L-Lipschitz continuous and satisfies $\|g\|^2_k \le B$ and that noise $n_t$ is $\sigma$-sub-Gaussian. Also assume that $S_0 \ne \emptyset$ and that $g(s) \ge h$ for all $s \in S_0$. If $\beta_t$ is chosen as in (\ref{eq:beta}) and the next state is sampled within $\Xi_t \cap \Psi_t$ or $\hat{S}_t \cap \Psi_t$ (i.e., the subsets of $S_t$), the probability of entering a safe state is guaranteed to be at least $1-\delta$.  
\end{theorem}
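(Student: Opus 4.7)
The plan is to combine a standard GP concentration argument with the Lipschitz structure baked into the definition of $S_t$, and then propagate safety through induction on $t$. At a high level, I want to show that with probability at least $1-\delta$ the true safety function lies inside every confidence interval $C_t(\bm{\eta})$ simultaneously, and once that holds, membership in $S_t$ analytically implies $g(\bm{\eta}) \ge h$; since the algorithm samples only from subsets of $S_t$, safety follows.

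First I would invoke the RKHS concentration result of Srinivas et al.\ (as imported via Lemma~\ref{upper_IG}, which supplies the requisite finite information-gain bound $\gamma_t$ for the composite spatio-temporal kernel of equation~(\ref{eq:st_kernel})). With the choice $\beta_t = 2B + 300 \gamma_t \log^3(t/\delta)$ and the sub-Gaussian noise assumption, this gives the uniform event
\[
\Pr\bigl[\,\forall\, t\ge 1,\ \forall\,\bm{\eta}\in\mathcal{H}:\ |\mu_{t-1}(\bm{\eta}) - g(\bm{\eta})| \le \beta_t^{1/2}\sigma_{t-1}(\bm{\eta})\,\bigr] \ge 1-\delta.
\]
On this event, $g(\bm{\eta})\in Q_t(\bm{\eta})$ for every $t$ and $\bm{\eta}$, and since $C_t$ is defined as the intersection of $Q_1,\dots,Q_t$ with the initial bracket $[h,\infty)$ on $S_0$, we get $l_t(\bm{\eta}) \le g(\bm{\eta}) \le u_t(\bm{\eta})$ for all $t,\bm{\eta}$. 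Conditioning on this ``good'' event is what allows us to carry out the remaining argument deterministically.

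Next I would argue by induction on $t$ that every state the algorithm could visit at time $t$ is truly safe, i.e.\ $g(\{t,\bm{s}\})\ge h$. The base case is the hypothesis $g(s)\ge h$ for $s\in S_0$. For the inductive step, the algorithm samples $\bm{s}_t$ from $\Xi_t\cap\Psi_t$ or $\hat{S}_t\cap\Psi_t$, both subsets of $S_t$, so it suffices to show $S_t$ consists of truly safe states. If $\bm{s}\in S_t$, there exists $\bm{s}'\in\hat{S}_{t-1}$ with
\[
l_t(\{t-1,\bm{s}'\}) - L(\bm{s},\bm{s}',t,t-1) \ge h.
\]
Combining the Lipschitz bound $g(\{t,\bm{s}\}) \ge g(\{t-1,\bm{s}'\}) - L(\bm{s},\bm{s}',t,t-1)$ with $g(\{t-1,\bm{s}'\})\ge l_t(\{t-1,\bm{s}'\})$ from the confidence event yields $g(\{t,\bm{s}\})\ge h$ exactly as required. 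Reachability from $\hat{S}_{t-1}$ guarantees the agent can actually get there using a path of previously certified safe states.

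The main obstacle I expect is bookkeeping around the simultaneous-in-$t$-and-$\bm{\eta}$ concentration: the standard Srinivas-style result is stated for the spatial GP with a fixed kernel, so I need Lemma~\ref{upper_IG} to guarantee $\gamma_t$ is finite for the composite kernel in~(\ref{eq:st_kernel}) and I need to be careful that the relevant query set $\mathcal{H}$ (times crossed with states) is compatible with the discretisation assumptions of that theorem; any union bound over $\mathcal{H}$ must be absorbed into the $\log^3(t/\delta)$ factor. The Lipschitz step and the inductive propagation through $S_t \supseteq \hat{S}_t$ are then essentially automatic from the definitions. Returnability via the approximation $R_\text{ret}(G_{t-1}^{t+1})$ does not enter the safety guarantee itself, so it need not be analysed here; the theorem only concerns the probability that visited states satisfy the constraint, not that the algorithm avoids getting stuck.
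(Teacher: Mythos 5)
Your proposal is correct and follows essentially the same route as the paper: the paper likewise combines the Srinivas et al.\ confidence-interval result (its Lemma~\ref{srinvas_theo6}, using Lemma~\ref{upper_IG} for the composite-kernel information gain) with a Lipschitz argument showing that membership in $S_t$ implies $g(\{t,\bm{s}\})\ge h$ whenever the true function respects the lower bounds (its Lemma~\ref{pr_g_ge_l}). Your presentation---conditioning on the uniform good event and then arguing deterministically by induction---is a cleaner phrasing of the same two ingredients than the paper's chain of conditional-probability inequalities, but it is not a different proof.
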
 
Proofs of Lemma~\ref{upper_IG} and Theorem~\ref{theorem:safe_guarantee} are given in the supplemental material.

\begin{table*}[t]
\begin{center}
	\begin{tabular}{cccccc}
	\hline\hline
		& Normalized RMSE & Number of unsafe actions & Accuracy & Precision & Recall  \\ \hline
		\algo & 1.00 & \textbf{0} & \textbf{0.51} & \textbf{1.00} & \textbf{0.24}\\ 
		Random & 2.34  & 47 & $-$ & $-$ & $-$ \\ 
	    Unsafe & 0.67 & 55 & $-$ & $-$ & $-$ \\ 
	    Ignore time-variance & 1.23 & 17 & 0.45 & 0.72 & 0.21\\
	    No cross-covariance  & 1.35 & 14 & 0.48 & 0.82 & 0.22 \\
	\hline\hline
	\end{tabular}
    \end{center}
    \vspace{-2mm}
    \caption{\label{tab:lunar_terrain}Results for simulated lunar surface exploration. \algo\ outperformed four baselines in terms of safety, accuracy, precision, and recall.}
    \vspace{-1.5mm}
\end{table*}

% Experiments
\section{Experiments}
\label{sec:experiments}

We evaluated our approach using two problem settings. One was a randomly generated environment, and the other was an environment created from real lunar terrain data. We used a simulated lunar environment because the lunar polar regions have been attracting attention as potential sites for a moon base although the illumination conditions can change drastically over time in those regions. For surface explorers relying on solar power, it is critical to traverse places that are illuminated. That is why we simulated lunar exploration rather than Mars surface exploration as was done in related work \cite{moldovan2012safe,wachi2018safe}.

\subsection{Randomly Generated Environment}
\label{sec:MC}

For the first problem setting, we defined a 20 $\times$ 20 square grid  and randomly generated a safety function value for each state. For each state (except the boundary ones), the agent could take one of five actions: {\it stay}, {\it go up}, {\it go down}, {\it go left}, and {\it go right}. We randomly generated an environment with time-variant safety as follows. First, we defined a radial basis function (RBF) kernel with two input dimensions that had a variance of 1 and length-scales of 2. Then, we generated a multivariate normal function such that the mean was zero and the covariance matrix was one of the above RBF. A two-dimensional safety function was then created and stacked with respect to time such that $g(t+1, \bm{s}) = g(t, \bm{s}) + L_t \cdot \varphi \cdot g(1, \bm{s}), \forall \bm{s} \in \mathcal{S}$, where $\varphi$ is a sample from the uniform distribution, $[-1, 1]$. By stacking the safety function with respect to time, we obtained a randomly generated environment with a time-variant safety function. 

The agent predicts the safety function by using the GP and a kernel defined as (\ref{eq:st_kernel}). RBFs were used for the four kernels, $k_s$, $k_t$, $\hat{k}_s$, and $\hat{k}_t$ in our model. Their length-scales were 2.0, 1.5, 4.0, and 10, and the prior variances for each were 1.0, 1.0, 0.5, and 0.5, respectively. The simulation settings were $\beta_t=2\, \forall t \ge 0$, $p = 3$, $L_s = 0.1$, and $L_t = 0.1$.

\vspace{-5mm}
\paragraph{Baselines} We compare the results for our approach with those of four baselines. The first baseline (\textbf{Random}) uses a random exploration strategy in which an agent randomly chooses the next action. The second baseline (\textbf{Unsafe}) uses pure maximization of information gain in which an agent greedily gathers information without any constraint on safety. In the third baseline (\textbf{Ignore time-variance}), an agent assumes that the environment is time-invariant; that is, the agent updates the GP model using only $k_s$. Last, we consider a baseline (\textbf{No cross-covariance}) in which the agent predicts the safety function value without using the cross-covariance term; that is, the agent uses only $k_s$ and $k_t$. We performed Monte-Carlo simulation using 100 samples of randomly generated safety values.
 
\vspace{-5mm}
\paragraph{Metrics} We used five metrics for the comparison: root mean square error (RMSE) between the true and predicted safety function values (normalized w.r.t. \algo), number of failures, accuracy, precision, and recall. Accuracy, precision, and recall are defined as $\frac{\text{TP + TN}}{\text{TP + TN + FP + FN}}$, $\frac{\text{TP}}{\text{TP + FP}}$, and $\frac{\text{TP}}{\text{TP + FN}}$, respectively, in which ``positive'' means safe. Note that ``FP'' may result in a catastrophic failure, so keeping precision high is quite important. Precision is a measure of the safety guarantee, and recall is a measure of the efficiency of the expansion. 

As shown in Table~\ref{tab:simpleGridWorld1}, our approach outperformed the baselines in terms of accuracy, precision, recall, and number of failures. RMSE was better for unsafe exploration than for \algo\ because the agent visited any state with the highest variance without any consideration of safety. Note that, in order to make the total number of actions identical in all cases, the agent was allowed to explore even after failure.

\subsection{Simulated Moon Surface Exploration}

For the second problem setting, we used a lunar region about one kilometer square centered on 88.664$^\circ$~S and 68.398$^\circ$~W. We used real lunar data (\url{https://goo.gl/6eXmQP}) managed by NASA/GSFC/ASU. The resolution was 10 m, and the square grid used was 100 $\times$ 100. The simulated environment was created by first obtaining five images of the target region (Three out of the five images are shown in Figure~\ref{fig:nac_pic}) and then linearly interpolating them so that the duration between images was $\sim \!$ 1.8 hours. The overall mission period was assumed to be $\sim \!$ 15 days, so the number of images was~200. We assumed that a rover was powered by solar arrays. Safety function $g$ was defined as the luminance value. Each pixel in each image had luminance value information, and the values were normalized such that the maximum value was $1.5$ and the minimum value was $-0.5$. We set the safety threshold $h$ to $-0.25$ in consideration of the required electrical power.

\begin{figure}[t]
	\begin{tabular}{cccc}
	\begin{minipage}{.15\textwidth}
		\centering
		\includegraphics[width=22mm]{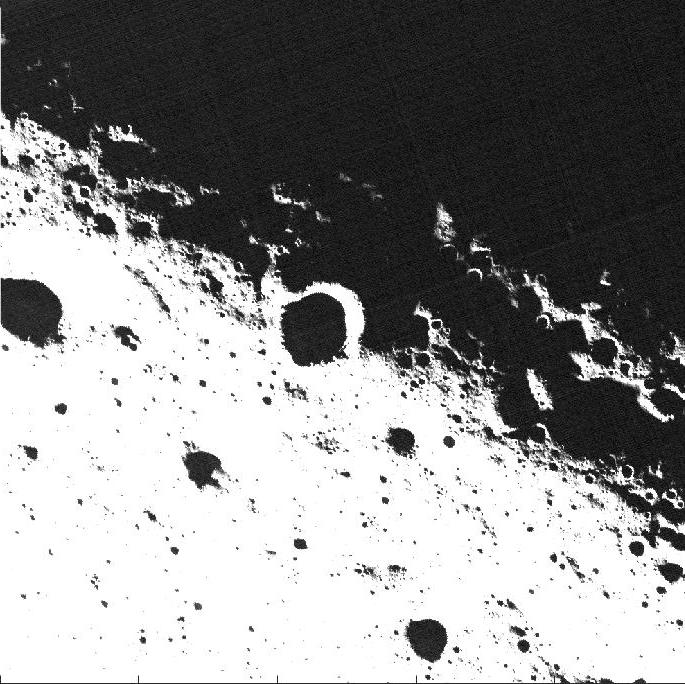}
	\end{minipage}
	\hspace{-3mm}
		\begin{minipage}{.15\textwidth}
		\centering
		\includegraphics[width=22mm]{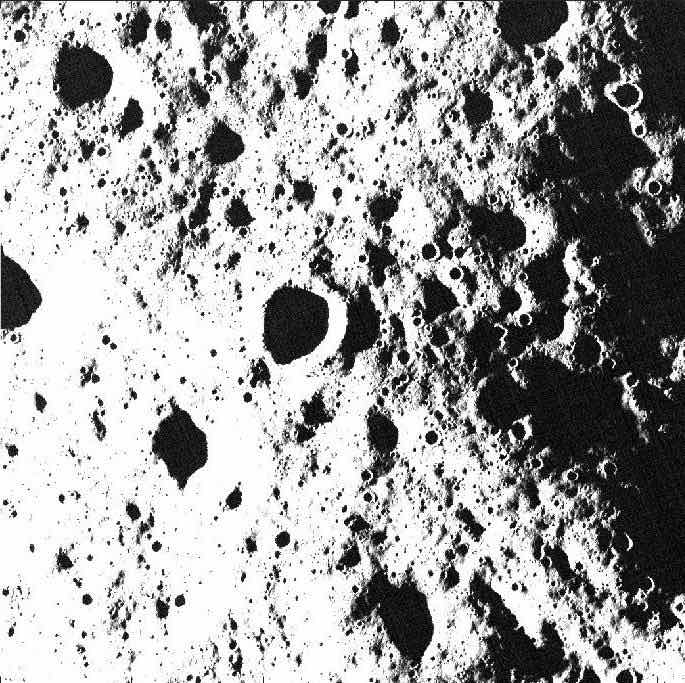}
	\end{minipage}
	\hspace{-3mm}
		\begin{minipage}{.15\textwidth}
		\centering
		\includegraphics[width=22mm]{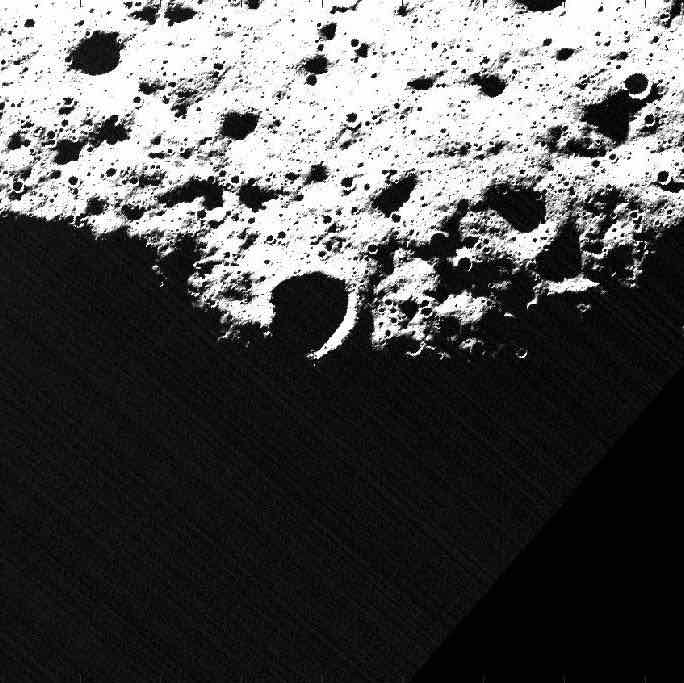}
	\end{minipage}
	\end{tabular}
    \caption{\label{fig:nac_pic}Three out of five images of the same region (1.0 km $\times$ 1.0 km) centered on \textit{de Gerlache Rim} (88.664$^\circ$ S, 68.398$^\circ$ W). Spatio-temporal map with 200 time steps was created by linearly interpolating the five images.}
\end{figure}

The luminance value of each state was predicted using a GP and we used a spatio-temporal kernel defined as (\ref{eq:st_kernel}). We used RBF for all kernel functions. The length-scales for $k_s$, $k_t$, $\hat{k}_s$, and $\hat{k}_t$ were 20 m, 9 hours, 30 m, and 36 hours, respectively. The prior variances for each kernel were 8 m$^2$, 4.5 hours$^2$, 4 m$^2$, and 2.4 hours$^2$, respectively. We assumed a noise standard deviation of 0.001. The parameter settings were $\beta_t=2\, \forall t \ge 0$, $p = 3$\, $L_s=0.05$, and $L_t=0.01$.

The five metrics in the first simulation were again used. As shown in Table~\ref{tab:lunar_terrain}, our algorithm outperformed the four baselines in terms of accuracy, precision, and recall. Our problem formulation aims to enlarge recall while keeping precision quite high. Higher precision and recall mean that \algo\ efficiently expanded the safe region by solving the min-max problem while avoiding FP. As shown in Figure~\ref{fig:simulation_result}, exploration was the safest and most efficient with \algo.

\begin{figure}[t]
\centering
\vspace{-1mm}
\hspace{2mm}
	\includegraphics[width=73mm]{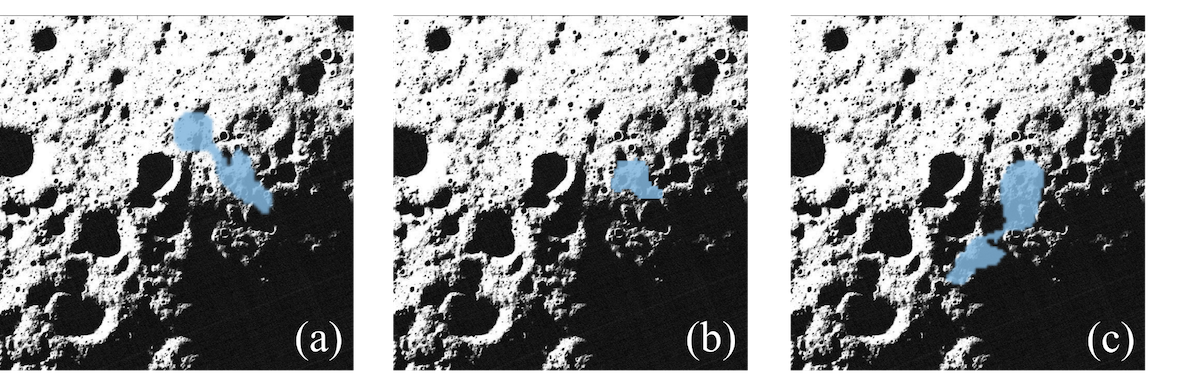}
	\vspace{-2mm}
	\caption{Comparison of different exploration schemes. Blue regions represent the (predicted) safe spaces for (a) \algo, (b)~Ignore time-variance case, and the explored space for (c)~Unsafe case.}
	\label{fig:simulation_result}
	\vspace{-1mm}
\end{figure}

% Conclusion
\section{Conclusion}

We introduced the novel problem of safely exploring a time-variant and a priori unknown environment. We formulated it as a min-max problem in which the cumulative number of safe states is maximized and then applied our proposed algorithm, \algo, which efficiently expands the safe region. An important aspect of this algorithm is that it explicitly considers the lower bound of the cumulative number of safe states and then chooses the state with the maximum value of the weighted summation of the mean and the width of the confidence interval of the safety function. This algorithm incentivizes exploration while guaranteeing safety by exploiting the GP structure of the safety function. Finally, we demonstrated the effectiveness of our proposed approach by theoretical analysis and numerical simulation. 

Moving forward, it would be interesting to extend to the reinforcement learning setting in which the cumulative reward is maximized, as well as apply our proposed approach to other problem domains including medical care and robotic manipulation tasks. We believe that our results contribute to bridging the gap between machine learning algorithms and safety-critical applications.

% Reference
\bibliographystyle{aaai}
\bibliography{reference}

\onecolumn

\maketitle

{\huge \section{Supplemental Material}}

\vspace{5mm}
\subsection{A. Preliminary Lemmas}

\begin{lemma}
\label{lem:reach_ret}
For any $\mathcal{X} \subseteq \mathcal{S}$, the following hold:
\begin{enumerate}[label=(\roman*)]
\setlength{\leftskip}{0.7cm}
\item $R_{\text{reach}}(\mathcal{X}) \supseteq \mathcal{X}$,
\item $R_{\text{ret}}(\mathcal{X}) \supseteq \mathcal{X}$.
\end{enumerate}
\end{lemma}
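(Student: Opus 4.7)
The plan is to unpack the definitions directly, since both claims follow immediately from the fact that each operator is defined as the set-theoretic union of $\mathcal{X}$ with an auxiliary set of states. My intention is to write a one-line set-inclusion argument for each of the two parts separately, relying only on the explicit definitions of $R_{\text{reach}}$ and $R_{\text{ret}}$ that were given in the body of the paper.

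For part (i), I would simply invoke the definition $R_{\text{reach}}(\mathcal{X}) = \mathcal{X}\cup\{\bm{s} \in \mathcal{S} \mid \exists \bm{s}' \in \mathcal{X}, a \in \mathcal{A}: \bm{s}=f(\bm{s}', a)\}$ and observe that $\mathcal{X}$ appears as a summand in the union, so $R_{\text{reach}}(\mathcal{X}) \supseteq \mathcal{X}$ is immediate. For part (ii), the analogous reasoning applies to $R_{\text{ret}}(\mathcal{X}) = \mathcal{X} \cup \{\bm{s}\in\mathcal{S} \mid \exists a \in\mathcal{A}: f(\bm{s}, a) \in \mathcal{X} \}$. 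No Lipschitz assumption, no GP machinery, and no properties of $f$ or $\mathcal{A}$ are needed; the lemma is purely a bookkeeping statement about how the two operators are constructed.

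Since the argument is essentially a one-step unfolding of definitions, there is no real obstacle to overcome. The only thing to be careful about is to make clear that both operators are defined with $\mathcal{X}$ explicitly included in the union (rather than, say, a version that returns only the newly reachable or newly returnable states), which is what distinguishes this from a nontrivial inclusion statement. I would therefore keep the proof a two-sentence remark, one sentence per part, emphasizing the structural form $\mathcal{X} \cup (\cdots)$ of each definition.
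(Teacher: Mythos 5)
Your proposal is correct and matches the paper's own proof, which likewise observes that the claim follows directly from the definitions of $R_{\text{reach}}$ and $R_{\text{ret}}$, each of which explicitly includes $\mathcal{X}$ as a term in a union. Your slightly more explicit unpacking of the $\mathcal{X} \cup (\cdots)$ structure is the same argument spelled out.
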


\begin{proof}
This lemma follows directly from the definitions of $R_{\text{reach}}$ and $R_{\text{ret}}$.
\end{proof}

\begin{lemma}
\label{lem:size_observation}
For any $j \ge i \ge 1$, the following hold.
\begin{enumerate}[label=(\roman*)]
\setlength{\leftskip}{0.7cm}
\item $S_i(\bm{y}_{i-1}) = S_i(\bm{y}_j)$,
\item $G_{i-1}^{i+1}(\bm{y}_{i-1}) = G_{i-1}^{i+1}(\bm{y}_{j})$,
\item $\hat{S}_i(\bm{y}_{i-1}) = \hat{S}_i(\bm{y}_j)$.
\end{enumerate}
\end{lemma}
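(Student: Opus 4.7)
My plan is to establish all three equalities by a single strong induction on $i \ge 1$, leveraging the structural fact that every ingredient in the definitions of $S_i$, $G_{i-1}^{i+1}$, and $\hat{S}_i$ depends on $\bm{y}$ only through its first $i-1$ coordinates. Since $\bm{y}_{i-1}$ and $\bm{y}_j$ (for $j \ge i-1$) agree on those coordinates, the corresponding sets must coincide. The base case $i=1$ is immediate because $\hat{S}_0 = S_0$ is specified a priori without reference to any observation, and $l_1$ is determined by the GP prior alone.

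For part (i), I would expand
\begin{equation*}
S_i(\bm{y}) = \{\bm{s} \in \mathcal{S} \mid \exists \bm{s}' \in \hat{S}_{i-1}(\bm{y}):\ l_i(\{i-1,\bm{s}'\};\bm{y}) - L(\bm{s},\bm{s}',i,i-1) \ge h\}
\end{equation*}
and identify the two $\bm{y}$-dependent pieces. Unwinding $l_i = \min C_i$ via the recursion $C_i = Q_i \cap C_{i-1}$ together with $Q_i(\bm{\eta}) = [\mu_{i-1}(\bm{\eta}) \pm \beta_i^{1/2}\sigma_{i-1}(\bm{\eta})]$ exhibits $l_i$ as a function of the posterior mean and variance at step $i-1$, both of which are determined exclusively by $\bm{y}_{i-1}$. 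Combined with the inductive hypothesis applied to $\hat{S}_{i-1}$, part (i) follows. Part (ii) is identical in flavor: by the definition of $G_{\tau'}^{\tau}(\bm{y}_{t-1})$ specialized at $\tau'=i-1$, $\tau=i+1$, and the implicit $t=i$, the only observation-dependent pieces are $\hat{S}_{i-1}$ and $l_i$, both already controlled.

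For part (iii), I substitute into
\begin{equation*}
\hat{S}_i(\bm{y}) = \{\bm{s} \in S_i(\bm{y}) \mid \bm{s} \in R_\text{reach}(\hat{S}_{i-1}(\bm{y})) \cap R_\text{ret}(G_{i-1}^{i+1}(\bm{y}))\}
\end{equation*}
and invoke parts (i) and (ii) of the current step together with the inductive hypothesis on $\hat{S}_{i-1}$. The operators $R_\text{reach}$ and $R_\text{ret}$ are purely set-theoretic and carry no observation-dependence of their own, so identical inputs yield identical outputs.

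The only obstacle I anticipate is bookkeeping: each part at index $i$ relies on part (iii) at index $i-1$ to move $\hat{S}_{i-1}$ across the equality, so I would state and prove all three parts together in a single simultaneous induction rather than sequentially. Beyond this, the argument is routine definition-chasing and involves no probabilistic or analytic machinery.
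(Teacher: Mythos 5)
Your proposal is correct and follows essentially the same route as the paper: the paper's proof is a one-line assertion that $S_i$, $G_{i-1}^{i+1}$, and $\hat{S}_i$ depend only on $\bm{y}_{i-1}$, and your simultaneous induction simply spells out why (the bound $l_i$ is built from the posterior at step $i-1$, $\hat{S}_{i-1}$ is handled by the inductive hypothesis, and $R_\text{reach}$, $R_\text{ret}$ are observation-free set operators). The only value you add over the paper is making the mutual dependence among the three parts explicit via the joint induction, which is the right way to make the ``follows directly from the definitions'' claim rigorous.
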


\begin{proof}
This lemma follows directly from the definitions of $S$, $G$ and $\hat{S}$; $S_i$, $G_{i-1}^{i+1}$, and $\hat{S}_i$ depend only on $\bm{y}_{i-1}$.
\end{proof}

\begin{lemma}
\label{lem:S_G}
For any $i \ge j \ge 1$, the following hold:
\begin{enumerate}[label=(\roman*)]
\setlength{\leftskip}{0.7cm}
\item $S_i(\bm{y}_j) \supseteq G_{i-1}^{i+1}(\bm{y}_j)$,
\item $G_{i}^{i+2}(\bm{y}_j) \cap G_{i+1}^{i+3}(\bm{y}_j) = G_{i}^{i+3}(\bm{y}_j)$.
\end{enumerate}
\end{lemma}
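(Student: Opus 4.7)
Both parts flow from the Lipschitz-in-time structure of $S$ and $G$ combined with the uniform time-step assumption $d_t(a,b) = |a-b|\cdot d_t(1,0)$, which lets me quantify the temporal slack in each defining inequality. The overall strategy is to unfold each set's definition, extract the certifying witness in $\hat{S}_\bullet$, and compare the Lipschitz slacks on either side.

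For \textbf{part (i)}, I would fix an arbitrary $\bm{s} \in G_{i-1}^{i+1}(\bm{y}_j)$, take the witness $\bm{s}' \in \hat{S}_{i-1}$ guaranteed by the definition of $G$, and show that the same $\bm{s}'$ serves as a witness for $\bm{s} \in S_i(\bm{y}_j)$. The only quantitative fact needed is $d_t(i+1,i-1) = 2\, d_t(i,i-1)$, which forces $L(\bm{s},\bm{s}',i+1,i-1) \ge L(\bm{s},\bm{s}',i,i-1)$. Hence the inequality certifying $\bm{s} \in G_{i-1}^{i+1}$ is strictly stronger than the one defining $\bm{s} \in S_i$ with the same witness, and the containment is immediate.

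For \textbf{part (ii)}, I would prove the two inclusions separately. The containment $G_i^{i+3} \subseteq G_i^{i+2}$ mirrors part (i) exactly (now $d_t(i+3,i) = 3\, d_t(1,0) > 2\, d_t(1,0) = d_t(i+2,i)$, so the same witness in $\hat{S}_i$ works with a full step of slack to spare). Combined with $G_i^{i+3} \subseteq G_{i+1}^{i+3}$, proved by transferring a witness from $\hat{S}_i$ at time $i$ to $\hat{S}_{i+1}$ at time $i+1$ using Lipschitz continuity of $g$ in time (which absorbs exactly one time step of slack) together with the recursive clause $\hat{S}_{i+1} \subseteq R_\text{reach}(\hat{S}_i)$, this yields the $\subseteq$ direction of the equality. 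For the reverse inclusion $G_i^{i+2} \cap G_{i+1}^{i+3} \subseteq G_i^{i+3}$, I would pair a witness $\bm{s}_1 \in \hat{S}_i$ certifying $\bm{s} \in G_i^{i+2}$ with a witness $\bm{s}_2 \in \hat{S}_{i+1}$ certifying $\bm{s} \in G_{i+1}^{i+3}$ and show that $\bm{s}_1$ already certifies $\bm{s} \in G_i^{i+3}$ once the one extra temporal step is paid for via the same Lipschitz-in-time bound, with the $G_{i+1}^{i+3}$ membership guaranteeing that the relevant lower-bound value does not degrade by more than $L_t d_t(1,0)$.

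\textbf{Main obstacle.} The delicate step is the witness-transfer in part (ii), which relates the reference sets $\hat{S}_i$ and $\hat{S}_{i+1}$ and the two time slices $l_t(\{i,\cdot\})$ and $l_t(\{i+1,\cdot\})$. Because $\hat{S}_{i+1}$ is defined recursively through reachability/returnability on $\hat{S}_i$ (and, for returnability, through $G_i^{i+2}$), the link between the two sets is indirect; threading the Lipschitz continuity of $g$ in time through these recursive definitions without double-counting or losing more than one step of temporal slack is the careful bookkeeping that the proof must handle.
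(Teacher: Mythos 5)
Part (i) of your proposal is correct and is exactly the definition-unfolding argument the paper intends: the same witness $\bm{s}'\in\hat{S}_{i-1}$ works because $d_t(i+1,i-1)\ge d_t(i,i-1)$ makes the inequality defining $G_{i-1}^{i+1}$ strictly stronger than the one defining $S_i$. (The paper's entire proof of the lemma is the single sentence that it ``follows directly from the definitions,'' so for part (i) there is nothing more to compare against.)

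For part (ii), however, your proposal has a genuine gap, and it is precisely the step you flag as the ``main obstacle.'' The inclusion $G_i^{i+3}\subseteq G_i^{i+2}$ is indeed trivial (same witness set $\hat{S}_i$, smaller time gap), but both remaining inclusions require transferring a witness between $\hat{S}_i$ and $\hat{S}_{i+1}$, and the mechanism you propose --- Lipschitz continuity of $g$ in time --- does not apply. The defining inequalities of $S$ and $G$ are stated in terms of the GP posterior lower confidence bound $l$, not in terms of $g$; the paper assumes $g$ is Lipschitz but nowhere assumes or proves that $l(\{i+1,\cdot\})$ degrades from $l(\{i,\cdot\})$ by at most $L_t\,d_t(i+1,i)$, so ``absorbing one time step of slack'' via the Lipschitz bound is not licensed. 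Moreover, producing a witness in $\hat{S}_{i+1}$ requires more than a lower-bound inequality: $\hat{S}_{i+1}$ is cut down by the reachability and returnability constraints, and the clause you invoke, $\hat{S}_{i+1}\subseteq R_{\text{reach}}(\hat{S}_i)$, points in the wrong direction --- it constrains members of $\hat{S}_{i+1}$, it does not promote a state of $\hat{S}_i$ into $\hat{S}_{i+1}$. The reverse inclusion $G_i^{i+2}\cap G_{i+1}^{i+3}\subseteq G_i^{i+3}$ has the same problem: the witness $\bm{s}_1$ for $G_i^{i+2}$ gives only $2L_t d_t(1,0)$ of temporal slack where $3L_t d_t(1,0)$ is needed, and the witness $\bm{s}_2\in\hat{S}_{i+1}$ lives at a different time slice with no stated relation between $l(\{i+1,\bm{s}_2\})$ and $l(\{i,\bm{s}_1\})$, so the two inequalities cannot be combined as you describe. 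To be fair, the paper's one-line justification conceals exactly the same difficulty --- part (ii) does not follow ``directly from the definitions'' either, and only the two trivial-direction containments can be obtained by pure definition unfolding --- but as written your proposal does not close this gap.
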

\begin{proof}
This lemma follows directly from the definition of $S$ and $G$.
\end{proof}

\begin{lemma}
\label{lem:bound}
For any $t \in \mathcal{T}$ and $\eta \in \mathcal{H}$, the following hold:
\begin{enumerate}[label=(\roman*)]
\setlength{\leftskip}{0.7cm}
\item $l_{t+1}(\eta) \ge l_{t}(\eta)$,
\item $u_{t+1}(\eta) \le u_{t}(\eta)$.
\end{enumerate}
\end{lemma}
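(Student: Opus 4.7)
The plan is to leverage the recursive definition $C_t(\bm{\eta}) = Q_t(\bm{\eta}) \cap C_{t-1}(\bm{\eta})$ to establish that the confidence intervals form a nested (shrinking) sequence, from which both inequalities follow immediately by monotonicity of $\min$ and $\max$ on a set with respect to inclusion.

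First, I would observe the one-step inclusion
\begin{equation*}
C_{t+1}(\bm{\eta}) = Q_{t+1}(\bm{\eta}) \cap C_{t}(\bm{\eta}) \subseteq C_{t}(\bm{\eta}),
\end{equation*}
which is immediate from the definition: an intersection of $C_t(\bm{\eta})$ with any other set is a subset of $C_t(\bm{\eta})$. Note the base case $C_0(\bm{\eta}) = [h,\infty]$ for $\bm{s} \in S_0$ does not need any special handling because we only invoke the recursive relation.

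Next, for any nonempty sets $A \subseteq B \subseteq \mathbb{R}$ we have $\min A \ge \min B$ and $\max A \le \max B$ (assuming the extrema exist, which holds here since each $C_t(\bm{\eta})$ is a closed, nonempty interval by construction). Applying this general fact with $A = C_{t+1}(\bm{\eta})$ and $B = C_t(\bm{\eta})$ yields
\begin{equation*}
l_{t+1}(\bm{\eta}) = \min C_{t+1}(\bm{\eta}) \ge \min C_t(\bm{\eta}) = l_t(\bm{\eta}),
\end{equation*}
\begin{equation*}
u_{t+1}(\bm{\eta}) = \max C_{t+1}(\bm{\eta}) \le \max C_t(\bm{\eta}) = u_t(\bm{\eta}),
\end{equation*}
which establishes both (i) and (ii).

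There is essentially no obstacle: the entire argument is definitional once one recognizes that the intersection operator in the recursion guarantees nestedness of the confidence intervals across iterations. The only subtlety worth a brief remark is ensuring nonemptiness of each $C_t(\bm{\eta})$ so that $\min$ and $\max$ are well-defined; this follows inductively since $C_0$ is nonempty on $S_0$ and, whenever invoked in the algorithm, $Q_t \cap C_{t-1}$ remains nonempty under the high-probability event that $g(\bm{\eta}) \in C_t(\bm{\eta})$ for all $t$, which is precisely the event used in the safety guarantee of Theorem~\ref{theorem:safe_guarantee}.
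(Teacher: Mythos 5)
Your proof is correct and is essentially the paper's argument made explicit: the paper simply states that the lemma follows directly from the definitions of $l_t$ and $u_t$, and the nestedness $C_{t+1}(\bm{\eta}) = Q_{t+1}(\bm{\eta}) \cap C_t(\bm{\eta}) \subseteq C_t(\bm{\eta})$ together with monotonicity of $\min$ and $\max$ under inclusion is exactly the intended reasoning. Your added remark on nonemptiness is a reasonable extra precaution but not a departure from the paper's approach.
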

\begin{proof}
This lemma follows directly from the definitions of $l$ and $u$.
\end{proof}

\subsection{B. Safe Space}

\begin{lemma}
\label{lemma:S_observation}
For any $t$, $|S_{t+1}(\bm{y}_t)| \ge |S_{t+1}(\bm{y}_{t-1})|$.
\end{lemma}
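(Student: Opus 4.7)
My plan is to prove the stronger pointwise inclusion $S_{t+1}(\bm{y}_{t-1}) \subseteq S_{t+1}(\bm{y}_t)$, which yields the cardinality inequality immediately. Unfolding the definition of $S_{t+1}$, I would write $\bm{s} \in S_{t+1}(\bm{y})$ as the existence of a witness $\bm{s}' \in \hat{S}_t(\bm{y})$ with $l_{t+1}(\{t,\bm{s}'\};\bm{y}) - L(\bm{s},\bm{s}',t+1,t) \ge h$. The Lipschitz term does not depend on observations, so the task reduces to (i) comparing the witness sets $\hat{S}_t(\bm{y})$ and (ii) comparing the pointwise lower bounds $l_{t+1}(\{t,\bm{s}'\};\bm{y})$.

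Piece (i) is handled by Lemma B.2(iii), which gives $\hat{S}_t(\bm{y}_t) = \hat{S}_t(\bm{y}_{t-1})$ since $\hat{S}_t$ is built from $Q_i$'s with $i \le t$, each of which depends only on $\bm{y}_{i-1}$. So the same witness $\bm{s}'$ is available in both cases.

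Piece (ii) is the core of the argument. I would expand $C_{t+1}(\bm{y}) = Q_{t+1}(\bm{y}) \cap C_t(\bm{y})$ and note that $C_t(\bm{y}_t) = C_t(\bm{y}_{t-1})$ for the same reason as above, so the two $C_{t+1}$'s share their $C_t$ factor and differ only in $Q_{t+1}$. By the recursive definition $C_t \subseteq Q_t = [\mu_{t-1} \pm \beta_t^{1/2}\sigma_{t-1}]$, and since $Q_{t+1}(\bm{y}_{t-1}) = [\mu_{t-1} \pm \beta_{t+1}^{1/2}\sigma_{t-1}]$ has the same center as $Q_t$ but a weakly larger radius whenever $\beta_t$ is non-decreasing in $t$ (which holds for the choice in~(\ref{eq:beta})), we obtain $C_t \subseteq Q_{t+1}(\bm{y}_{t-1})$. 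Therefore
\[ C_{t+1}(\bm{y}_t) \;=\; Q_{t+1}(\bm{y}_t) \cap C_t \;\subseteq\; C_t \;\subseteq\; Q_{t+1}(\bm{y}_{t-1}), \]
and intersecting with $C_t$ on the right gives $C_{t+1}(\bm{y}_t) \subseteq Q_{t+1}(\bm{y}_{t-1}) \cap C_t = C_{t+1}(\bm{y}_{t-1})$. Taking minima on both sides yields $l_{t+1}(\{t,\bm{s}'\};\bm{y}_t) \ge l_{t+1}(\{t,\bm{s}'\};\bm{y}_{t-1})$, so the same witness that certifies $\bm{s} \in S_{t+1}(\bm{y}_{t-1})$ also certifies $\bm{s} \in S_{t+1}(\bm{y}_t)$.

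The main obstacle is that one cannot compare $Q_{t+1}(\bm{y}_t)$ with $Q_{t+1}(\bm{y}_{t-1})$ directly: although the posterior standard deviation is monotone ($\sigma_t \le \sigma_{t-1}$), the posterior mean can shift in either direction when $y_t$ is absorbed, so neither box is contained in the other in general. The argument above routes the containment through the already-accumulated intersection $C_t$, and it is the monotonicity of $\beta_t$ that allows $C_t$ to fit inside the looser $Q_{t+1}(\bm{y}_{t-1})$ so that the extra intersection with $Q_{t+1}(\bm{y}_t)$ can only tighten — never loosen — the lower bound.
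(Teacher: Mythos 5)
Your proposal is correct and takes essentially the same route as the paper: the pointwise inclusion $S_{t+1}(\bm{y}_{t-1}) \subseteq S_{t+1}(\bm{y}_t)$, with the witness set held fixed by Lemma~\ref{lem:size_observation}(iii) and the lower confidence bound shown to be monotone. The only difference is that your piece (ii) detours through the monotonicity of $\beta_t$; the paper instead writes the lower bound appearing in $S_{t+1}(\bm{y}_{t-1})$ directly as $l_t$, so the nesting $C_{t+1} = Q_{t+1} \cap C_t \subseteq C_t$ (Lemma~\ref{lem:bound}(i)) already yields $l_{t+1} \ge l_t$ with no assumption on $\beta_t$.
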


\begin{proof}
By definition of $S$, 
\begin{alignat*}{2}
S_{t+1}(\bm{y}_t) =&\ \{ \bm{s} \in \mathcal{S} \mid \ \exists \bm{s}' \in \hat{S}_{t}(\bm{y}_t): l_{t+1}(\{t,\bm{s}'\}) - L(\bm{s},\bm{s}',t+1, t) \ge h \}, \\
S_{t+1}(\bm{y}_{t-1}) =&\ \{ \bm{s} \in \mathcal{S} \mid \ \exists \bm{s}' \in \hat{S}_{t}(\bm{y}_{t-1}): l_{t}(\{t,\bm{s}'\}) - L(\bm{s},\bm{s}',t+1, t) \ge h \}.
\end{alignat*}
By Lemma~\ref{lem:size_observation} (iii), it holds that $\hat{S}_t(\bm{y}_t) = \hat{S}_t(\bm{y}_{t-1})$.
In addition, by Lemma~\ref{lem:bound} (i), $l_{t+1}(\{t, \bm{s}'\}) \ge l_{t}(\{t, \bm{s}'\})$ is satisfied for the lower bound. Hence, the following holds:
\[
S_{t+1}(\bm{y}_t) \supseteq S_{t+1}(\bm{y}_{t-1}).
\]
\end{proof}

\begin{lemma}
\label{lemma:G_observation}
For any $t$, $|G_{t}^{t+2}(\bm{y}_t)| \ge |G_t^{t+2}(\bm{y}_{t-1})|$.
\end{lemma}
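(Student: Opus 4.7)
The proof will closely mirror that of Lemma~\ref{lemma:S_observation}, since the structure of $G_t^{t+2}$ is essentially the same as $S$ but with the time pair $(t+2,t)$ in place of $(t+1,t)$ in the Lipschitz penalty. The plan is to expand both sides by definition and then reduce the containment to a one-line lower-bound comparison on $l$.

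First I would write out both sets explicitly. Using the convention that $l_{t+1}(\bm{\eta})$ is the lower bound produced from observations $\bm{y}_t$ while $l_t(\bm{\eta})$ is the lower bound produced from $\bm{y}_{t-1}$ (the same convention used in Lemma~\ref{lemma:S_observation}), the definitions give
\begin{align*}
G_t^{t+2}(\bm{y}_t) &= \{\bm{s}\in\mathcal{S} \mid \exists\,\bm{s}'\in\hat{S}_t(\bm{y}_t):\ l_{t+1}(\{t,\bm{s}'\}) - L(\bm{s},\bm{s}',t+2,t) \ge h\},\\
G_t^{t+2}(\bm{y}_{t-1}) &= \{\bm{s}\in\mathcal{S} \mid \exists\,\bm{s}'\in\hat{S}_t(\bm{y}_{t-1}):\ l_{t}(\{t,\bm{s}'\}) - L(\bm{s},\bm{s}',t+2,t) \ge h\}.
\end{align*}
Notice that the Lipschitz penalty $L(\bm{s},\bm{s}',t+2,t)$ is purely a function of the time/state indices and does not depend on the observations, so it is identical in both expressions.

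Next I would show the containment $G_t^{t+2}(\bm{y}_t) \supseteq G_t^{t+2}(\bm{y}_{t-1})$. The witnessing families of $\bm{s}'$ agree by Lemma~\ref{lem:size_observation}(iii), which states $\hat{S}_t(\bm{y}_t)=\hat{S}_t(\bm{y}_{t-1})$. Now fix any $\bm{s}\in G_t^{t+2}(\bm{y}_{t-1})$ with witness $\bm{s}'$, so that
\[
l_t(\{t,\bm{s}'\}) - L(\bm{s},\bm{s}',t+2,t) \ge h.
\]
Applying Lemma~\ref{lem:bound}(i) gives $l_{t+1}(\{t,\bm{s}'\}) \ge l_t(\{t,\bm{s}'\})$, so
\[
l_{t+1}(\{t,\bm{s}'\}) - L(\bm{s},\bm{s}',t+2,t) \ge l_t(\{t,\bm{s}'\}) - L(\bm{s},\bm{s}',t+2,t) \ge h,
\]
which exhibits $\bm{s}\in G_t^{t+2}(\bm{y}_t)$ with the same witness $\bm{s}'$. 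Set containment yields the cardinality inequality $|G_t^{t+2}(\bm{y}_t)| \ge |G_t^{t+2}(\bm{y}_{t-1})|$.

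There is no real obstacle here; the argument is a direct analogue of Lemma~\ref{lemma:S_observation}. The only delicate point worth flagging in the writeup is the bookkeeping of the $l$-subscript: $l_{t+1}$ is the right object on the $\bm{y}_t$ side because incorporating the observation taken at time $t$ advances the GP posterior index by one, and only once this is aligned does Lemma~\ref{lem:bound}(i) apply cleanly. Everything else (the $\hat{S}_t$ equality and the observation-independence of $L$) is immediate from the preliminary lemmas.
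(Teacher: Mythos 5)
Your proof is correct and follows essentially the same route as the paper's: expand both sets by definition, invoke Lemma~\ref{lem:size_observation}(iii) to equate the witness sets $\hat{S}_t(\bm{y}_t)=\hat{S}_t(\bm{y}_{t-1})$, and apply Lemma~\ref{lem:bound}(i) to compare the lower bounds, yielding the containment and hence the cardinality inequality. Your element-wise writeup just makes explicit what the paper leaves implicit.
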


\begin{proof}
By definition of $G$, 
\begin{alignat*}{2}
G_{t}^{t+2}(\bm{y}_{t}) =&\ \{ \bm{s} \in \mathcal{S} \mid \exists \bm{s}' \in \hat{S}_{t}(\bm{y}_t): 
l_{t+1}(\{t,\bm{s}'\}) - L(\bm{s},\bm{s}', t+2, t) \ge h \}, \\
G_{t}^{t+2}(\bm{y}_{t-1}) =&\ \{ \bm{s} \in \mathcal{S} \mid \exists \bm{s}' \in \hat{S}_{t}(\bm{y}_{t-1}): 
l_t(\{t,\bm{s}'\}) - L(\bm{s},\bm{s}', t+2, t) \ge h \}.
\end{alignat*}
By Lemma~\ref{lem:size_observation} (iii), it holds that $\hat{S}_t(\bm{y}_t) = \hat{S}_t(\bm{y}_{t-1})$.
In addition, by Lemma~\ref{lem:bound} (i), $l_{t+1}(\{t, \bm{s}'\}) \ge l_{t}(\{t, \bm{s}'\})$ is satisfied for the lower bound. Hence, the following holds:
\[
G_{t}^{t+2}(\bm{y}_t) \supseteq G_t^{t+2}(\bm{y}_{t-1}).
\]
\end{proof}

\begin{proof}\!\text{[of \textbf{Lemma~\ref{theorem_observation}}]} Consider $i=t+2$, then the following holds:
\begin{equation*}
\begin{split}
q(t+2, \bm{s}_{t+2}; \bm{y}_{t+1})
= &\ |\hat{S}_{t+2}(\bm{y}_{t+1})| \\
= &\ |S_{t+2}(\bm{y}_{t+1}) \cap R_\text{reach}(\hat{S}_{t+1}(\bm{y}_{t+1})) \cap R_\text{ret}(G_{t+1}^{t+3}(\bm{y}_{t+1}))| \\
= &\ |S_{t+2}(\bm{y}_{t+1}) \cap R_\text{reach}(\hat{S}_{t+1}(\bm{y}_{t})) \cap R_\text{ret}(G_{t+1}^{t+3}(\bm{y}_{t+1}))|  \hspace{1cm} \text{(By Lemma~\ref{lem:size_observation} (iii))}\\
\ge &\ |S_{t+2}(\bm{y}_{t}) \cap R_\text{reach}(\hat{S}_{t+1}(\bm{y}_{t})) \cap R_\text{ret}(G_{t+1}^{t+3}(\bm{y}_{t}))| \hspace{1.7cm} \text{(By Lemmas~\ref{lemma:S_observation} and \ref{lemma:G_observation})} \\
= &\ |\hat{S}_{t+2}(\bm{y}_{t})| \\
= &\ q(t+2, \bm{s}_{t+2}; \bm{y}_{t}).
\end{split}
\end{equation*}
The above inequality is recursively satisfied for all $i = [t+2, N]$; that is, 
\begin{alignat*}{2}
q(i, \bm{s}_{i}; \bm{y}_{i-1}) \ge q(i, \bm{s}_{i}; \bm{y}_{i-2}) \ge \cdots \ge q(i, \bm{s}_{i}; \bm{y}_{t}).
\end{alignat*}
Therefore, from the above inequality, 
\begin{alignat*}{2}
J(t, \bm{s}_t; \bm{y}_{i-1})
= &\ \sum_{i=t}^N q(i, \bm{s}_i; \bm{y}_{i-1}) \\
\ge &\ q(t, \bm{s}_t; \bm{y}_{t-1}) + \sum_{i=t+1}^N q(i, \bm{s}_i; \bm{y}_{t}).
\end{alignat*}
Hence, Lemma~\ref{theorem_observation} holds.
\end{proof}

\begin{lemma}
\label{lem:hat_S_recursive}
For all $i = [t, N]$ and $j<i$, the following inequality holds:
\[
|\hat{S}_i(\bm{y}_j)| \ge |\hat{S}_{i-1}(\bm{y}_j) \cap G_{i+1}(\bm{y}_j)|.
\]
\end{lemma}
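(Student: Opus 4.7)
The plan is to prove Lemma~\ref{lem:hat_S_recursive} by establishing the set inclusion
\[
\hat{S}_{i-1}(\bm{y}_j) \cap G_{i-1}^{i+1}(\bm{y}_j) \;\subseteq\; \hat{S}_i(\bm{y}_j),
\]
reading the shorthand $G_{i+1}$ in the statement as the two-index set $G_{i-1}^{i+1}$ consistent with the approximation used to define $\hat{S}_i$, and then taking cardinalities on both sides. Unfolding the working definition
\[
\hat{S}_i(\bm{y}_j) = S_i(\bm{y}_j) \cap R_{\text{reach}}(\hat{S}_{i-1}(\bm{y}_j)) \cap R_{\text{ret}}(G_{i-1}^{i+1}(\bm{y}_j)),
\]
the task reduces to checking that an arbitrary $\bm{s} \in \hat{S}_{i-1}(\bm{y}_j) \cap G_{i-1}^{i+1}(\bm{y}_j)$ lies in each of the three intersecting sets.

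Two of the three inclusions are immediate from Lemma~\ref{lem:reach_ret}: part~(i) gives $\hat{S}_{i-1}(\bm{y}_j) \subseteq R_{\text{reach}}(\hat{S}_{i-1}(\bm{y}_j))$, and part~(ii) gives $G_{i-1}^{i+1}(\bm{y}_j) \subseteq R_{\text{ret}}(G_{i-1}^{i+1}(\bm{y}_j))$, so reachability and returnability are inherited for free. The one substantive step is to verify $\bm{s} \in S_i(\bm{y}_j)$. Here I would instantiate the defining condition of $G_{i-1}^{i+1}(\bm{y}_j)$: there exists $\bm{s}' \in \hat{S}_{i-1}(\bm{y}_j)$ with
\[
l_{j+1}(\{i-1,\bm{s}'\}) - L(\bm{s},\bm{s}',i+1,i-1) \ge h.
\]
The consistent time-step assumption $d_t(t',t) = |t-t'| \cdot d_t(t-1,t)$ yields $d_t(i,i-1) < d_t(i+1,i-1)$, whence $L(\bm{s},\bm{s}',i,i-1) \le L(\bm{s},\bm{s}',i+1,i-1)$, so the same witness $\bm{s}'$ also gives $l_{j+1}(\{i-1,\bm{s}'\}) - L(\bm{s},\bm{s}',i,i-1) \ge h$, which is exactly membership in $S_i(\bm{y}_j)$.

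Assembling the three containments produces the desired inclusion, and taking cardinalities gives the stated inequality. I do not anticipate a real obstacle beyond the bookkeeping of disambiguating the shorthand $G_{i+1}$ and being explicit that the Lipschitz slack $L(\bm{s},\bm{s}',\cdot,i-1)$ is monotone in its first time argument; that monotonicity is the sole ingredient not reducible to one of the preliminary lemmas, and it is immediate from the consistent time-step convention.
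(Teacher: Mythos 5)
Your proof is correct and follows essentially the same route as the paper's: unfold the definition of $\hat{S}_i(\bm{y}_j)$, absorb the reachability and returnability sets via Lemma~\ref{lem:reach_ret}, and drop $S_i(\bm{y}_j)$ from the intersection because $G_{i-1}^{i+1}(\bm{y}_j) \subseteq S_i(\bm{y}_j)$. The only difference is that the paper invokes this last containment as Lemma~\ref{lem:S_G}~(i) (stated there as following ``directly from the definitions''), whereas you derive it explicitly from the monotonicity of the Lipschitz slack in the time argument --- a welcome filling-in of a detail the paper leaves implicit.
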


\begin{proof}
From definition of $\hat{S}_i(\bm{y}_j)$,
\begin{alignat*}{2}
\hat{S}_i(\bm{y}_j) = S_i(\bm{y}_j) \cap R_\text{reach}(\hat{S}_{i-1}(\bm{y}_j)) \cap R_\text{ret}(G_{i-1}^{i+1}(\bm{y}_j)).
\end{alignat*}
From Lemmas~\ref{lem:reach_ret} and \ref{lem:S_G} (i),
\begin{alignat*}{2}
\hat{S}_i(\bm{y}_j) 
& \supseteq S_i(\bm{y}_j) \cap \hat{S}_{i-1}(\bm{y}_j) \cap G_{i-1}^{i+1}(\bm{y}_j) \hspace{1cm} \text{(By Lemma \ref{lem:reach_ret})} \\
& = \hat{S}_{i-1}(\bm{y}_j) \cap G_{i-1}^{i+1}(\bm{y}_j). \hspace{2.3cm} \text{(By Lemma \ref{lem:S_G} (i))}
\end{alignat*}
Therefore, Lemma~\ref{lem:hat_S_recursive} holds.
\end{proof}

\begin{proof}\!\text{[of \textbf{Lemma~\ref{lem:hat_S_recursive2}}]} 
By Lemma~\ref{lem:hat_S_recursive}, 
\begin{alignat*}{2}
\hat{S}_i(\bm{y}_t) \supseteq \hat{S}_{i-1}(\bm{y}_t) \cap G_{i-1}^{i+1}(\bm{y}_t).
\end{alignat*}
By using the above inequality recursively, we have the following
\begin{alignat*}{2}
\hat{S}_i(\bm{y}_t) 
& \supseteq \hat{S}_{i-1}(\bm{y}_t) \cap G_{i-1}^{i+1}(\bm{y}_t) \\
& \supseteq (\hat{S}_{i-2}(\bm{y}_t) \cap G_{i-2}^{i}(\bm{y}_t)) \cap G_{i-1}^{i+1}(\bm{y}_t) \\
& = \hat{S}_{i-2}(\bm{y}_t) \cap G_{i-2}^{i+1}(\bm{y}_t) \hspace{3cm} \text{(By Lemma~\ref{lem:S_G} (ii))} \\
& \supseteq \cdots \\
& \supseteq \hat{S}_{t}(\bm{y}_t) \cap G_t^{i+1}(\bm{y}_t).
\end{alignat*}
Therefore, Lemma~\ref{lem:hat_S_recursive2} holds.
\end{proof}

\begin{proof}\!\text{[of \textbf{Theorem~\ref{theorem:lower_bound_J}}]} 
By Lemma~\ref{lem:hat_S_recursive2}, 
\begin{alignat*}{2}
|\hat{S}_i(\bm{y}_t)| \ge |\hat{S}_{t}(\bm{y}_t) \cap G_t^{i+1}(\bm{y}_t)|.
\end{alignat*}
Hence, using Lemma~\ref{theorem_observation}, we have the following
\begin{alignat*}{2}
J(t, \bm{s}_t; \bm{y}_{t-1}) 
&\ge |\hat{S}_t(\bm{y}_{t-1})| + \sum_{i=t+1}^{N}  |\hat{S}_{t}(\bm{y}_t) \cap G_{t}^{i+1}(\bm{y}_t)| \\
&\ge |\hat{S}_t(\bm{y}_{t-1})| + \sum_{i=t+1}^{N}  M_i(\bm{y}_t)
%= &\ |\hat{S}_t(\bm{y}_{t-1})| + M_{t+1}(\bm{y}_t) \cdot \sum_{i=t+1}^{N}  \frac{M_i(\bm{y}_t)}{M_{t+1}(\bm{y}_t)},
\label{eq:V_M}
\end{alignat*}
where $M_i$ is defined as
$M_i := |\hat{S}_{t} \cap G_{t}^{i+1}|$. 
Therefore, Theorem~\ref{theorem:lower_bound_J} holds.
\end{proof}

\subsection{C. Bounding of Information Gain of the Spatio-Temporal Kernel}

\begin{lemma}
\label{lemma:product_kernel_bound_IG}
Let $k_t$ be a kernel function on $\mathcal{T}$ with rank at most $d$. Then,
\[
\gamma(T; k_s \otimes k_t; X) \le d \gamma(T; k_s; \mathcal{S}) + d \log T.
\]
\end{lemma}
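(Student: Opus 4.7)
The plan is to exploit the rank-$d$ structure of $k_t$ to obtain a finite-dimensional feature decomposition, reduce the product-kernel information gain to one involving only $k_s$, and then pay a modest $d \log T$ penalty for the rescaling induced by the time features. Concretely, since $k_t$ has rank at most $d$, there exists a feature map $\phi \colon \mathcal{T} \to \mathbb{R}^d$ with $k_t(t,t') = \phi(t)^\top \phi(t') = \sum_{i=1}^d \phi_i(t)\phi_i(t')$, so that $(k_s \otimes k_t)(\{s,t\},\{s',t'\}) = \sum_{i=1}^d \phi_i(t)\phi_i(t')\, k_s(s,s')$.

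For any evaluation set $A = \{(s_j, t_j)\}_{j=1}^T \subseteq X$, I would write the information gain in its Gaussian closed form, $I(g; \bm{y}_A) = \tfrac{1}{2}\log\det(I_T + \sigma^{-2} K_A)$, and factor the kernel matrix as $K_A = \sum_{i=1}^d \Phi_i K_s^{(A)} \Phi_i$, where $\Phi_i = \mathrm{diag}(\phi_i(t_1), \ldots, \phi_i(t_T))$ and $K_s^{(A)}$ is the $k_s$-Gram matrix at the spatial coordinates. The next step is a sub-additivity lemma for log-determinants of sums of PSD matrices: $\log\det(I + \sum_{i=1}^d A_i) \le \sum_{i=1}^d \log\det(I + A_i)$ when the $A_i$ are PSD, which follows by an eigenvalue-majorization / concavity-of-$\log\det$ argument applied to the sequence of partial sums. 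This collapses the problem to bounding each single-feature contribution $\log\det(I + \sigma^{-2} \Phi_i K_s^{(A)} \Phi_i)$.

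For an individual $i$, I would argue that $\Phi_i K_s^{(A)} \Phi_i$ and $\Phi_i^2 K_s^{(A)}$ share the same nonzero spectrum, and pull the scalar factor $\|\Phi_i\|_\infty^2 \le C$ out of the kernel. This reduces the single-feature log-det to $\log\det(I + (C/\sigma^2) K_s^{(A)})$ up to a correction; choosing the bound $C$ to be $\max_{j} \phi_i(t_j)^2$ and absorbing the multiplicative rescaling into an additive $\log T$ term (via $\log(1 + c\lambda) \le \log(1+\lambda) + \log(1+c)$ together with $\sum_j \log(1+c \lambda_j/\sigma^2) \le \log T + \sum_j \log(1+\lambda_j/\sigma^2)$ when $c$ is bounded by a function of $T$) produces a bound of $\gamma(T; k_s; \mathcal{S}) + \log T$ per index $i$. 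Summing over $i = 1, \ldots, d$ yields the claimed inequality.

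The main obstacle will be the last step: quantifying the rescaling cost so that it contributes at most $\log T$ per feature rather than something that scales with $T$. Naively bounding the diagonal of $\Phi_i^2$ gives a multiplicative constant inside the log-det, and turning this into an additive $\log T$ requires a careful interlacing / volume-ratio argument (essentially controlling how the spectrum of $\Phi_i K_s^{(A)} \Phi_i$ differs from that of $K_s^{(A)}$). Once that bookkeeping is done, the bound $d\,\gamma(T; k_s; \mathcal{S}) + d \log T$ follows directly, and the same template extends to handle the additive combinations $\oplus$ used in Lemma~\ref{upper_IG}.
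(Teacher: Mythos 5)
The paper does not actually prove this lemma from first principles: its entire proof is the one-line citation ``a direct result from Theorem~2 of Krause and Ong (2011)'', so any self-contained argument is already going beyond what the paper does. Your setup is the right one and matches the structure of that cited result: the rank-$d$ decomposition $k_t(t,t')=\sum_{i=1}^d\phi_i(t)\phi_i(t')$, the factorization $K_A=\sum_{i=1}^d\Phi_iK_s^{(A)}\Phi_i$, and the sub-additivity $\log\det(I+\sum_iA_i)\le\sum_i\log\det(I+A_i)$ for PSD summands (which follows from $\det(I+A+B)=\det(I+A)\det\bigl(I+(I+A)^{-1/2}B(I+A)^{-1/2}\bigr)\le\det(I+A)\det(I+B)$) are all correct.

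The gap is exactly where you flag it, and it is not mere bookkeeping. First, the operator inequality you implicitly use to ``pull the scalar out,'' namely $\Phi_iK_s\Phi_i\preceq\|\Phi_i\|_\infty^2K_s$, is false in general (take $K_s$ the all-ones $2\times2$ matrix and $\Phi_i=\mathrm{diag}(1,0)$; the difference has a negative eigenvalue). It must be routed through the symmetrized form: $\det(I+\sigma^{-2}\Phi_iK_s\Phi_i)=\det(I+\sigma^{-2}K_s^{1/2}\Phi_i^2K_s^{1/2})$ and $K_s^{1/2}\Phi_i^2K_s^{1/2}\preceq CK_s$, which does hold. Second, and more seriously, the inequality $\log(1+c\lambda)\le\log(1+\lambda)+\log(1+c)$ summed over the $T$ eigenvalues yields an additive cost of $T\log(1+c)$, not $\log T$; the asserted step $\sum_j\log(1+c\lambda_j/\sigma^2)\le\log T+\sum_j\log(1+\lambda_j/\sigma^2)$ does not follow from it and is false in general, since $\prod_j(1+c\lambda_j)/(1+\lambda_j)$ is of order $c^{\,r}$ when $r$ eigenvalues of $K_s^{(A)}$ are bounded away from zero, and $r$ can be as large as $T$. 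Closing this requires the ingredient you have not supplied: a boundedness/normalization assumption on the kernels (e.g.\ $k(x,x)\le1$, standard in Srinivas et al.\ and in Krause--Ong), under which $\sum_i\phi_i(t)^2=k_t(t,t)$ is controlled and the rescaling can be absorbed via a trace/eigenvalue accounting rather than a per-eigenvalue bound. Without importing Theorem~2 of Krause and Ong or reconstructing that accounting, the final step does not go through, so the proposal as written does not establish the lemma.
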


\begin{proof}
This lemma is a direct result from Theorem 2 of \citet{krause2011contextual}.
\end{proof}

\begin{lemma}
\label{lemma:additive_kernel_bound_IG}
Let $k_s$ and $k_t$ be kernel functions on $\mathcal{S}$ and $\mathcal{T}$, respectively. Then, for the additive combination $k = k_s \oplus k_t$ defined on $X$, it holds that
\[
\gamma(T; k_s \oplus k_t; X) \le \gamma(T; k_s; \mathcal{S}) + \gamma(T; k_t; \mathcal{T}) + 2\log T.
\]
\end{lemma}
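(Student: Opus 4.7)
The plan is to use the fact that an additive-kernel GP on $X=\mathcal{S}\times\mathcal{T}$ decomposes into two independent single-domain components, so that its mutual information with noisy observations splits into a sum of per-component contributions. For any sampling set $A=\{(\bm{s}_i,\bm{t}_i)\}_{i=1}^{T'}$ with $T'\le T$, the Gram matrix splits as $K_A=K_s^A+K_t^A$ with $K_s^A=[k_s(\bm{s}_i,\bm{s}_j)]$ and $K_t^A=[k_t(\bm{t}_i,\bm{t}_j)]$, both PSD. Using the closed form $I(g;\bm{y}_A)=\tfrac{1}{2}\log\det(I+\sigma^{-2}K_A)$, the task reduces to controlling a log-determinant of a sum in terms of the log-determinants of the summands.

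The key ingredient I would need is the matrix inequality $\det(I+A+B)\le\det(I+A)\det(I+B)$ for $A,B\succeq 0$. My plan is to prove it by a one-parameter monotonicity argument: define $\phi(c):=\log\det(I+cA+B)-\log\det(I+cA)-\log\det(I+B)$, note $\phi(0)=0$, and compute
\[
\phi'(c)=\operatorname{tr}\!\bigl[\bigl((I+cA+B)^{-1}-(I+cA)^{-1}\bigr)A\bigr].
\]
The bracketed factor is negative semi-definite because $B\succeq 0$ implies $(I+cA+B)^{-1}\preceq(I+cA)^{-1}$, and since $A\succeq 0$ the trace of a PSD-times-NSD product is nonpositive. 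Hence $\phi'(c)\le 0$ on $[0,1]$, giving $\phi(1)\le 0$.

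Applying the inequality to $\sigma^{-2}K_s^A$ and $\sigma^{-2}K_t^A$ yields
\[
I(g;\bm{y}_A)\le \tfrac{1}{2}\log\det(I+\sigma^{-2}K_s^A)+\tfrac{1}{2}\log\det(I+\sigma^{-2}K_t^A),
\]
and each summand is exactly the mutual information of an independent single-domain GP observed with the \emph{same} noise variance $\sigma^2$ at the projected points $\{\bm{s}_i\}\subseteq\mathcal{S}$ and $\{\bm{t}_i\}\subseteq\mathcal{T}$. Each is therefore bounded by $\gamma(T;k_s;\mathcal{S})$ and $\gamma(T;k_t;\mathcal{T})$ respectively. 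Taking the supremum over $|A|\le T$ delivers the claimed bound; the additive $2\log T$ is slack that would also appear naturally if one followed the alternative data-processing route of splitting the noise $n_i=n_i^s+n_i^t$ into two independent Gaussians, since converting each per-component MI back to the $\sigma^2$-scale used in the definition of $\gamma$ costs a logarithmic correction, in the same spirit as the $d\log T$ term in Lemma~\ref{lemma:product_kernel_bound_IG}. The principal obstacle is the PSD determinant inequality itself; once it is in hand, everything else is bookkeeping about projections and possibly repeated indices in the projected sets.
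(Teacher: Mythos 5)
Your proposal is essentially correct, but it takes a genuinely different route from the paper: the paper does not prove this lemma at all, it simply invokes Theorem 3 of Krause and Ong (2011) as a black box, whereas you give a self-contained linear-algebraic argument. Your key inequality $\det(I+A+B)\le\det(I+A)\det(I+B)$ for $A,B\succeq 0$ is true, and your one-parameter monotonicity proof of it is sound (the operator anti-monotonicity $(I+cA+B)^{-1}\preceq(I+cA)^{-1}$ and the fact that $\operatorname{tr}(DA)=\operatorname{tr}(A^{1/2}DA^{1/2})\le 0$ for $D\preceq 0\preceq A$ are exactly what is needed); combined with $I(g;\bm{y}_A)=\tfrac12\log\det(I+\sigma^{-2}K_A)$ and $K_A=K_s^A+K_t^A$ this yields the bound \emph{without} the $2\log T$ term, which is strictly stronger than the stated lemma and hence implies it. What your route buys is a cleaner and tighter statement; what the citation route buys is that Krause and Ong's argument already handles the one point where your bookkeeping genuinely matters: the projected coordinates $\{\bm{s}_i\}$ (and $\{\bm{t}_i\}$) form a \emph{multiset}, and in a spatio-temporal MDP repeated spatial locations at different times are the norm, not the exception. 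If $\gamma(T;k_s;\mathcal{S})$ is defined as a maximum over \emph{distinct} subsets (as in Srinivas et al.), then $\tfrac12\log\det(I+\sigma^{-2}K_s^A)$ with repeated rows can exceed it (consider a one-point domain sampled $T$ times, gaining $\tfrac12\log(1+T\sigma^{-2}k(\bm{s},\bm{s}))$ versus $\gamma_1$), and the discrepancy is of order $\tfrac12\log T$ per component --- which is precisely the room the $2\log T$ slack provides. So you should either define $\gamma$ over multisets or explicitly spend the $\log T$ per factor to absorb repetitions; with that one point made precise, your proof is complete and arguably preferable to the bare citation.
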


\begin{proof}
This lemma is a direct result from Theorem 3 of \citet{krause2011contextual}.
\end{proof}

\begin{proof} \text{[of \textbf{Lemma~\ref{upper_IG}}]}
For any pair of two kernels, $k_1$ and $k_2$, the information gain of the composite kernel, $k_{12}$ defined on $X_{12}$ is bounded by
\[
\gamma(T; k_{12}; X_{12}) = \max \{ \gamma(T; k_1 \oplus k_2; X_{12}^\oplus), \gamma(T; k_1 \otimes k_2; X_{12}^\otimes) \},
\]
where $X_{12}^\oplus$ and $X_{12}^\otimes$ represent the spaces for which $k_1 \oplus k_2$ and $k_1 \otimes k_2$ are defined, respectively. By the above relationship, $k_F(k_s^1, k_s^2, \cdots, k_t^1, k_t^2, \cdots)$ is guaranteed to have a bound on the information gain.

For example, a composite kernel is defined as $k_F = (k_s \oplus k_t) \oplus (\hat{k}_s \otimes \hat{k}_t)$, where $k_s$ and $\hat{k}_s$ are kernels on $\mathcal{S}$, and $k_t$ and $\hat{k}_t$ are kernels on $\mathcal{T}$. Let $d$ be the rank (see Lemma~\ref{lemma:product_kernel_bound_IG}). The information gain is:
\begin{alignat*}{2}
\gamma(T; k_F) 
&= \gamma(T; (k_s \oplus k_t) \oplus (\hat{k}_s \otimes \hat{k}_t)) \\
&\le \gamma(T; (k_s \oplus k_t)) + \gamma(T; \hat{k}_s \otimes \hat{k}_t) + 2 \log T \\
&\le \{\gamma(T; k_s; \mathcal{S}) + \gamma(T; k_t; \mathcal{T}) + 2\log T\} + \{d \gamma(T; k_s; \mathcal{S}) + d \log T\} + 2 \log T \\
&\le (d+1)\gamma(T; k_s; \mathcal{S}) + \gamma(T; k_t; \mathcal{T}) + (d+4)\log T.
\end{alignat*}
\end{proof}

\subsection{D. Probabilistic Safety Guarantee}

\begin{lemma}
\label{pr_g_ge_l}
If the next state is chosen in $S_t$, the probability of entering a safe state is guaranteed to be at least 
\[
\Pr[g(t, \bm{s}) \ge l(t, \bm{s})].
\]
\end{lemma}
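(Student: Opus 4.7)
The plan is to observe that membership in $S_t$ deterministically implies $l(t, \bm{s}) \ge h$, so the event ``the GP/Lipschitz lower bound holds at $(t, \bm{s})$'' is contained in the event ``$\bm{s}$ is safe at time $t$''. Because the claim is purely event-level, everything reduces to a one-line chain of inequalities plus monotonicity of probability; no concentration argument or tuning of $\beta_t$ is required here.

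First I would unfold the definition of $S_t$: if the next state $\bm{s}$ is chosen in $S_t$, then by definition there exists an anchor $\bm{s}' \in \hat{S}_{t-1}$ with $l_t(\{t-1, \bm{s}'\}) - L(\bm{s}, \bm{s}', t, t-1) \ge h$. Reading the symbol $l(t, \bm{s})$ in the statement as the Lipschitz-propagated lower bound on $g(t, \bm{s})$ produced by this witnessing anchor, namely $l_t(\{t-1, \bm{s}'\}) - L(\bm{s}, \bm{s}', t, t-1)$, this yields the deterministic conclusion $l(t, \bm{s}) \ge h$ as an immediate consequence of $\bm{s} \in S_t$.

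Next I would invoke Lipschitz continuity of $g$ (with constants $L_s$ and $L_t$) to transfer the GP lower bound at the anchor to a bound at $(t, \bm{s})$. On the event that the anchor's lower bound is correct, i.e., $g(t-1, \bm{s}') \ge l_t(\{t-1, \bm{s}'\})$, Lipschitz continuity gives
\[
g(t, \bm{s}) \ge g(t-1, \bm{s}') - L(\bm{s}, \bm{s}', t, t-1) \ge l_t(\{t-1, \bm{s}'\}) - L(\bm{s}, \bm{s}', t, t-1) = l(t, \bm{s}) \ge h.
\]
Hence $\{g(t, \bm{s}) \ge l(t, \bm{s})\} \subseteq \{g(t, \bm{s}) \ge h\}$, and monotonicity of probability closes the argument: the chance of entering a safe state is at least $\Pr[g(t, \bm{s}) \ge l(t, \bm{s})]$.

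The only real obstacle is notational: the lemma's ``$l(t, \bm{s})$'' is not defined earlier with this exact syntax, so one must commit to reading it as the Lipschitz-induced lower bound at $(t, \bm{s})$ coming from the anchor certifying $\bm{s} \in S_t$ (rather than, say, the raw posterior lower bound $l_t(\{t, \bm{s}\})$, for which the desired inequality $l(t, \bm{s}) \ge h$ is not forced by the definition of $S_t$). Once that identification is made, the proof is just the chain of inequalities above followed by monotonicity of probability.
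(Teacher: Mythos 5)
Your proof is correct, and it reaches the same substantive conclusion via a cleaner route than the paper. Both arguments rest on the same two ingredients -- unfolding the definition of $S_t$ to extract an anchor $\bm{s}' \in \hat{S}_{t-1}$ whose certified lower bound, after subtracting the Lipschitz term $L(\bm{s},\bm{s}',t,t-1)$, still clears $h$, and then reducing the safety probability to $\Pr[g \ge l]$ -- but the formalizations differ. The paper writes the safety probability as a \emph{conditional} probability given the (deterministic) event defining membership in $S_t$, then weakens the conditioning event in three stages (dropping the spatial term, then the temporal term, then replacing $h$ by $l$), and finally invokes ``conditional independence'' to strip the conditioning away. That chain implicitly assumes Lipschitz-type inequalities on the posterior lower bound $l$ itself (e.g., that $l(t',\bm{s}) \ge l(t',\bm{s}') - L_s d_s(\bm{s},\bm{s}')$), which the GP posterior does not automatically satisfy, and the final conditional-independence step is asserted rather than argued. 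Your version replaces all of this with a deterministic implication ($\bm{s} \in S_t \Rightarrow l(t,\bm{s}) \ge h$) followed by event inclusion and monotonicity of probability, which avoids both weak points. You also correctly flag the real issue with this lemma: the symbol $l(t,\bm{s})$ is never defined, and your reading (the Lipschitz-propagated bound from the witnessing anchor) is the one under which the statement is actually provable -- note, though, that the paper's proof of Theorem~\ref{theorem:safe_guarantee} silently switches to reading $l(t,\bm{s})$ as $\min C_t(\bm{\eta})$, so the two results do not compose as cleanly as the paper suggests. One minor remark: your middle Lipschitz display (transferring correctness of the anchor's confidence bound to $g(t,\bm{s})$) establishes $\{g(t-1,\bm{s}') \ge l_t(\{t-1,\bm{s}'\})\} \subseteq \{g(t,\bm{s}) \ge l(t,\bm{s})\}$, which is not needed for the stated conclusion -- the inclusion $\{g(t,\bm{s}) \ge l(t,\bm{s})\} \subseteq \{g(t,\bm{s}) \ge h\}$ already follows from $l(t,\bm{s}) \ge h$ alone -- but it is the step one would want when chaining this lemma with the confidence-interval guarantee, so it is harmless and arguably useful to record.
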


\begin{proof}
The probability of entering safe state is denoted as
\[
\Pr[g(t, \bm{s})\ge h \mid l(t', \bm{s}')-L_s d(\bm{s}, \bm{s}') - L_t d(\bm{t}, \bm{t}') \ge h].
\]
Under the assumption of Lipschitz continuity with Lipschitz constants $L_s$ and $L_t$, we transform the above formula:
\begin{alignat}{6}
\label{eq: Probability_guarantee}
\nonumber
&\ \Pr[g(t, \bm{s})\ge h \mid l(t', \bm{s}')-L_s d(\bm{s}, \bm{s}') - L_t d(\bm{t}, \bm{t}') \ge h] \\
\nonumber
\ge &\ \Pr[g(t, \bm{s})\ge h \mid l(t', \bm{s}) - L_t d(\bm{t}, \bm{t}') \ge h] \\
\nonumber
\ge &\ \Pr[g(t, \bm{s})\ge h \mid l(t, \bm{s})  \ge h] \\
\nonumber
\ge &\ \Pr[g(t, \bm{s}) \ge l(t, \bm{s}) \mid l(t, \bm{s})\ge h] \\
\nonumber
= &\ \Pr[g(t, \bm{s}) \ge l(t, \bm{s})].
\end{alignat}
The last equality exploits the conditional independence. Note that this probability can be tuned by adjusting $\beta$.
\end{proof}

\begin{lemma}
\label{srinvas_theo6}

Suppose that $\|g\|^2_k \le B$ and that noise $n_t$ is $\sigma$-sub-Gaussian. If $\beta_t$ is chosen as in (4),
then, for all $t \ge 1$ and all $\bm{s} \in \mathcal{S}$, it holds with probability
at least $1-\delta$ that $g(t, \bm{s}) \in C_t(\bm{\eta})$.
\end{lemma}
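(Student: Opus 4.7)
The plan is to reduce the claim directly to Theorem~6 of \citet{srinivas2009gaussian} and then propagate its conclusion through the recursive definition $C_t(\bm{\eta}) = Q_t(\bm{\eta}) \cap C_{t-1}(\bm{\eta})$. First, I would invoke that theorem, which states: under the hypotheses $\|g\|_k^2 \le B$, $\sigma$-sub-Gaussian noise, and $\beta_t = 2B + 300\,\gamma_t \log^3(t/\delta)$, the event
\[
\mathcal{E} = \bigl\{\,|g(\bm{\eta}) - \mu_{t-1}(\bm{\eta})| \le \beta_t^{1/2}\sigma_{t-1}(\bm{\eta}) \text{ for all } t \ge 1,\ \bm{\eta}\in\mathcal{H}\,\bigr\}
\]
holds with probability at least $1-\delta$. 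On $\mathcal{E}$, we immediately have $g(\bm{\eta}) \in Q_t(\bm{\eta})$ uniformly in $t$ and $\bm{\eta}$, which gives the confidence-interval membership at every individual iteration.

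Next, on $\mathcal{E}$ I would promote $g \in Q_t$ to $g \in C_t$ by induction on $t$. The base case $t = 0$ holds because $C_0(\bm{\eta}) = [h,\infty)$ is imposed only for $\bm{s} \in S_0$, where the standing assumption $g(\bm{s}) \ge h$ applies. For the inductive step, $C_t = Q_t \cap C_{t-1}$, the inductive hypothesis $g(\bm{\eta}) \in C_{t-1}(\bm{\eta})$, and the event $\mathcal{E}$ together give $g(\bm{\eta}) \in C_t(\bm{\eta})$. Since $\mathcal{E}$ has probability at least $1-\delta$, the conclusion follows.

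The only piece that is not a mechanical transcription of the Srinivas et al.\ argument is checking that their theorem actually applies in the spatio-temporal setting. Their bound requires the maximum information gain $\gamma_t$ appearing in $\beta_t$ to be well-defined and sub-linear for the kernel in use. For our composite kernel (\ref{eq:st_kernel}), this is precisely what Lemma~\ref{upper_IG} provides, by recursively bounding $\gamma_t$ of $\oplus$- and $\otimes$-combinations in terms of the $\gamma_t$ of the base spatial and temporal kernels. Combined with the standard sub-linear growth for RBF-type components, this certifies that (\ref{eq:beta}) is meaningful.

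The main obstacle, to the extent there is one, is therefore just the verification that our hypotheses fit the black-box theorem: $g$ must have bounded norm in the RKHS of the composite kernel and the noise must be sub-Gaussian. Both are explicit assumptions of our lemma, so the reduction is direct and no new concentration machinery is required; the probabilistic content is inherited wholesale from Srinivas et al., while Lemma~\ref{upper_IG} supplies the kernel-specific ingredient.
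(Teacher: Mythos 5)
Your proposal is correct and follows essentially the same route as the paper, whose entire proof is a one-line appeal to Theorem~6 of Srinivas et al.; you simply make explicit the two details the paper leaves implicit, namely the induction that promotes $g \in Q_t$ to $g \in C_t$ through the intersection $C_t = Q_t \cap C_{t-1}$, and the check via Lemma~\ref{upper_IG} that $\gamma_t$ is well-defined for the composite spatio-temporal kernel. No gap; your version is just a more careful write-up of the same reduction.
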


\begin{proof}
This lemma follows from Theorem 6 of \citet{srinivas2009gaussian}.
\end{proof}

\begin{proof}\!\text{[of \textbf{Theorem~\ref{theorem:safe_guarantee}}]}
By the definition of $l_t$; i.e., $l_t(\bm{\eta}) = \min C_t(\bm{\eta})$,
\[
\Pr[g(t, \bm{s}) \ge l(t, \bm{s})] \ge \Pr[g(t, \bm{s}) \in C_t(\bm{\eta})].
\]
Because of Lemma~\ref{pr_g_ge_l} and Lemma~\ref{srinvas_theo6},
\[
\Pr[g(t, \bm{s}) \ge l(t, \bm{s})]  \ge \Pr[g(t, \bm{s}) \in C_t(\bm{\eta})]　\ge 1-\delta.
\]
Hence, the probability that the agent takes an unsafe action is guaranteed to be at least $1-\delta$.
\end{proof}

\newpage
\subsection{E. Additional Information on the Simulation using Real Lunar Data}

Recall that we first prepared the total five images (see Figure~\ref{fig:nac_pic_appendix}) to create the spatio-temporal safety function. In this section, we provide the additional information on the data used for the simulated moon surface exploration especially in terms of how to prepare the five images.

We used the lunar images that were actually taken by Lunar Reconnaissance Orbiter (LRO); hence, the available number of images for the same region on the Moon is quite limited. That is why we created the spatio-temporal map by the linearly interpolating the images.

\begin{figure*}[h]
	\begin{tabular}{cccc}
    \begin{minipage}{.19\textwidth}
		\centering
		\includegraphics[width=25mm]{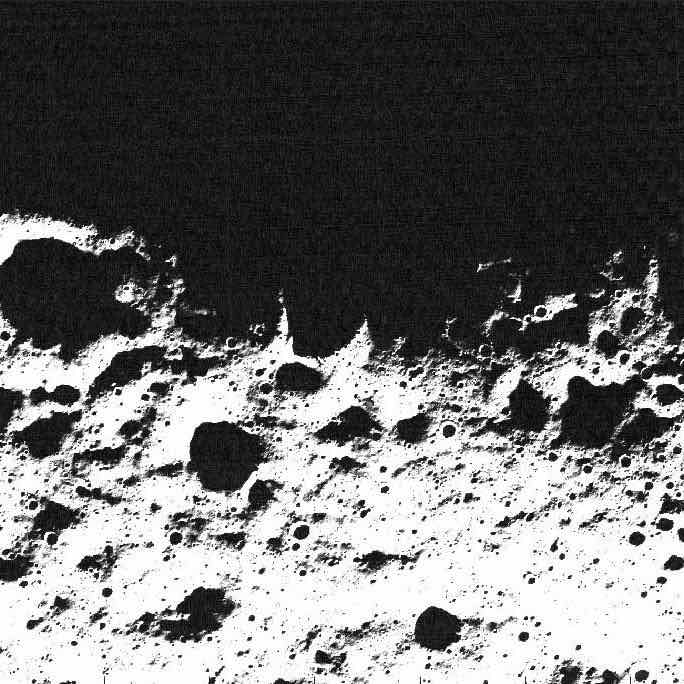}
	\end{minipage}
	\begin{minipage}{.19\textwidth}
		\centering
		\includegraphics[width=25mm]{figures/2.jpg}
	\end{minipage}
		\begin{minipage}{.19\textwidth}
		\centering
		\includegraphics[width=25mm]{figures/5.jpg}
	\end{minipage}
		\begin{minipage}{.19\textwidth}
		\centering
		\includegraphics[width=25mm]{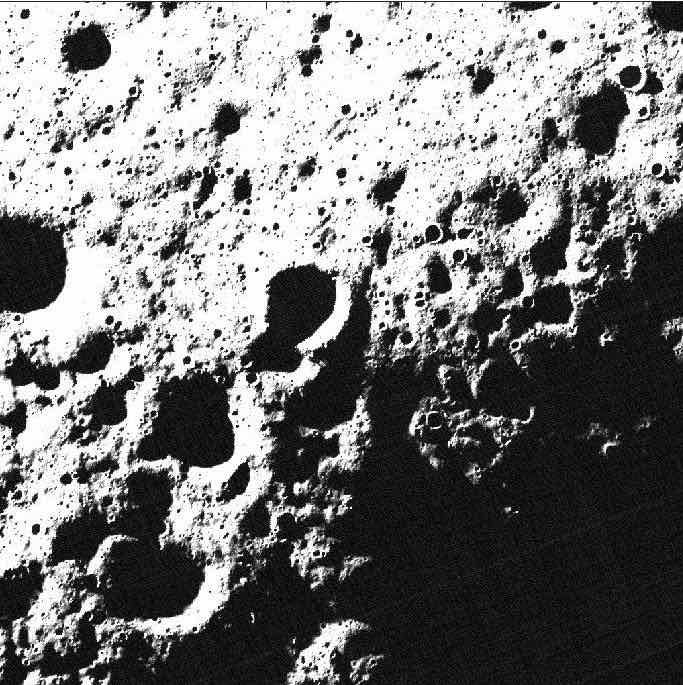}
	\end{minipage}
     %\hspace{1mm}
		\begin{minipage}{.19\textwidth}
		\centering
		\includegraphics[width=25mm]{figures/7.jpg}
	\end{minipage}
	\end{tabular}
    \caption{Five images of the same region (1.0 km $\times$ 1.0 km) centered on \textit{de Gerlache Rim} (88.664$^\circ$ S, 68.398$^\circ$ W). Spatio-temporal map with 200 time steps was created by linearly interpolating the images.}
\label{fig:nac_pic_appendix}
\end{figure*}

We first downloaded the six images with the following IDs from the Lunar Reconnaissance Orbiter Camera (LROC) website,\footnote{\url{https://goo.gl/6eXmQP}} and then clipped the same region (1.0 km $\times$ 1.0 km) centered on \textit{de Gerlache rim} (88.664$^\circ$ S, 68.398$^\circ$ W) from them.

\begin{itemize}
\setlength{\leftskip}{0.7cm}
    \item Image\_1: M138264280RE
    \item Image\_2: M140625419LE + M140625419RE
    \item Image\_3: M143000050LE
    \item Image\_4: M180899951LE
    \item Image\_5: M1100039834RE
\end{itemize}

In M140625419LE and M140625419RE, since all the targeted region is not in the angle of view as shown in Figure~\ref{fig:NAC_above_below}, we integrated with each other. Note that the time stamps of the two images are identical (i.e., M140625419LE and M140625419RE were taken by the left and right cameras of LRO almost at the same time).

\begin{figure}[h]
\vspace{5mm}
\centering
\subfigure[M140625419LE]{\includegraphics[width=2.8cm]{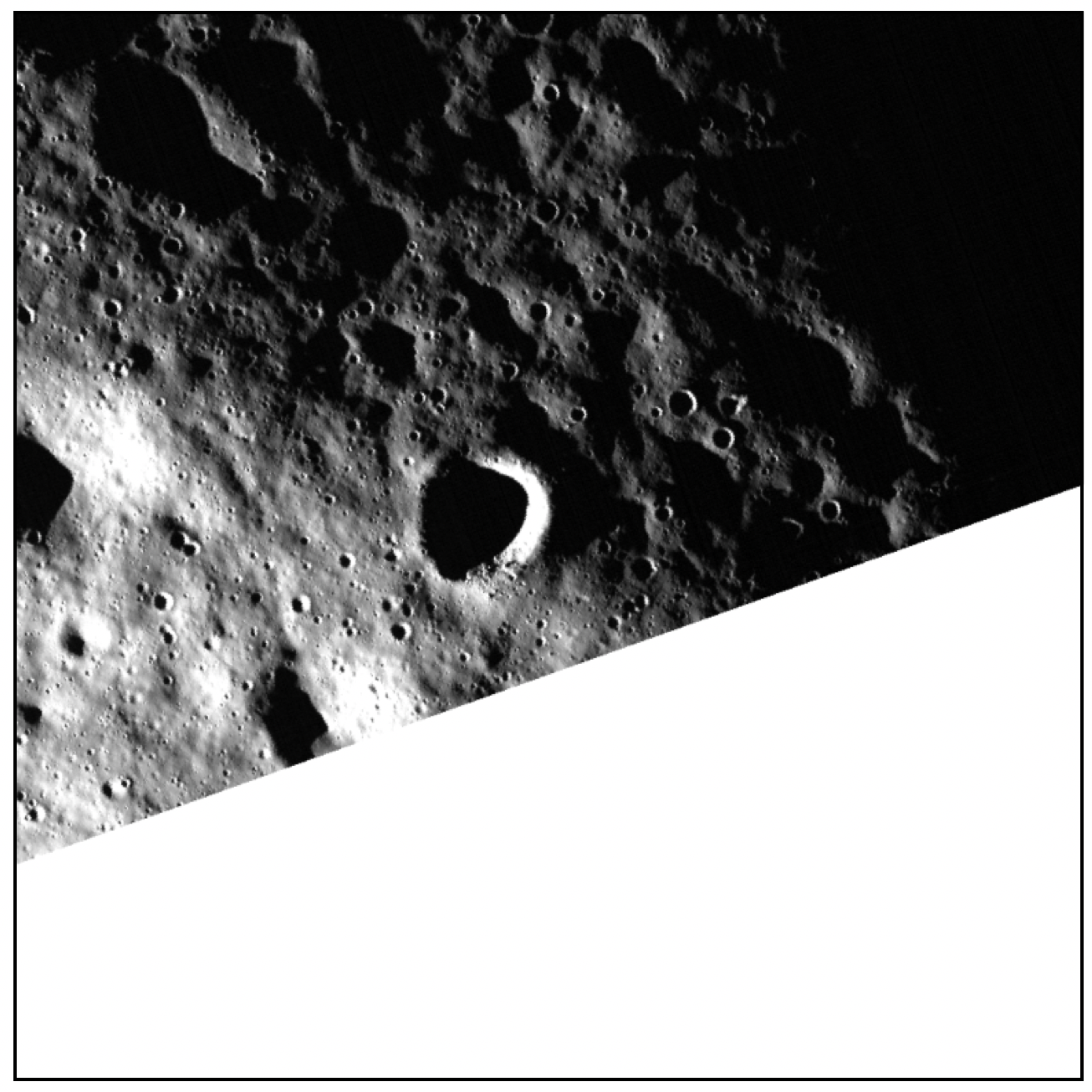}}
\hspace{25mm}
\subfigure[M140625419RE]{\includegraphics[width=2.8cm]{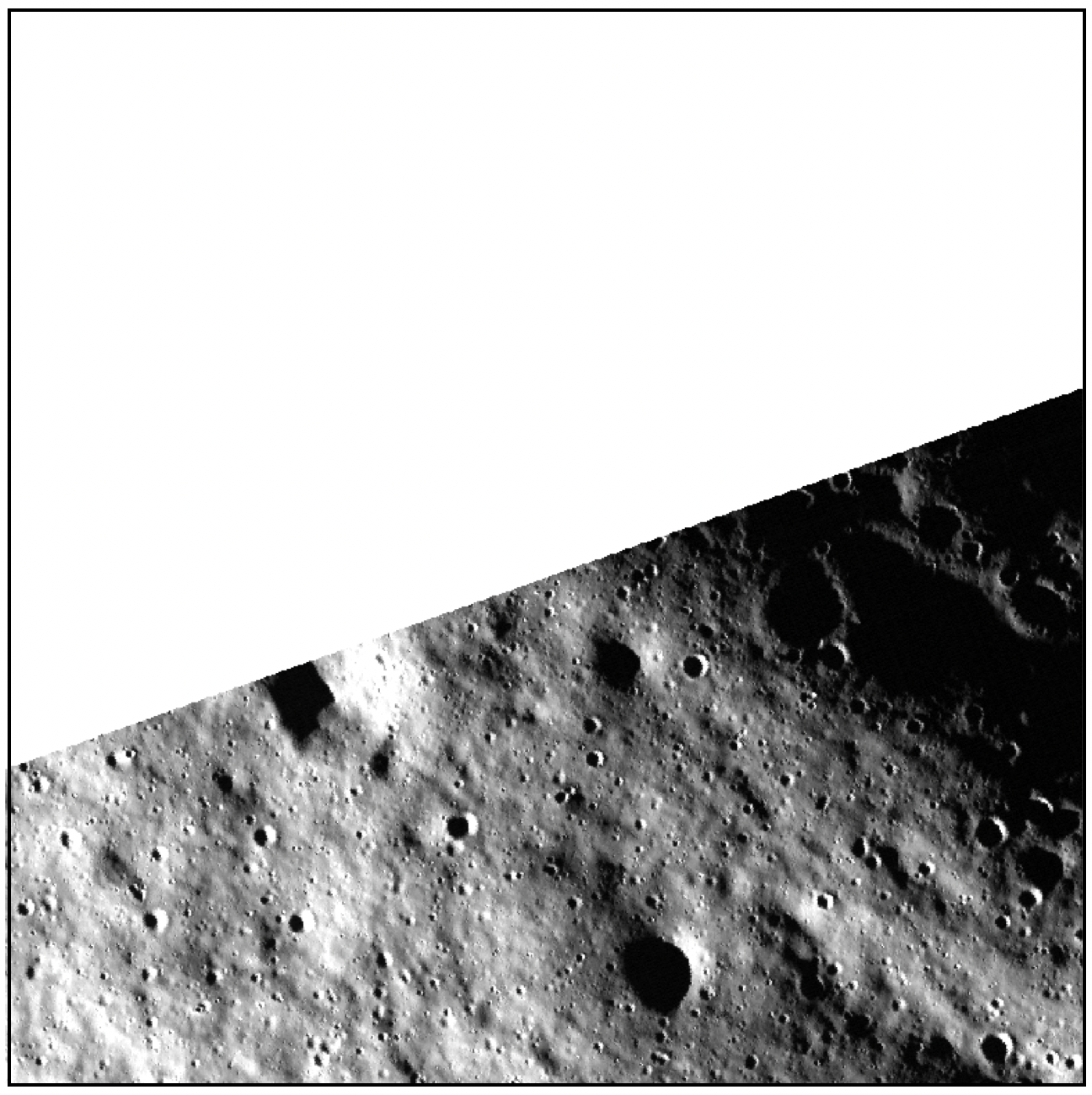}}
\caption{In the above two images, all the target region is not in the angle of view. Hence, we merged them into one image (i.e., Image\_2).}
\label{fig:NAC_above_below}
\end{figure}

Finally, the simulated environment was created by first obtaining five images of the target region and then linearly interpolating them so that the duration between images was $\sim \!$ 1.8 hours. The overall mission period was assumed to be $\sim \!$ 15 days, so the number of images was 200. 
\end{document}